\newtheorem{example}{Example}
\newtheorem{theorem}{Theorem}
\newtheorem{definition}{Definition}
\newtheorem{proposition}{Proposition}
\title{Generative Subspace Adversarial Active Learning for Outlier Detection in Multiple Views of High-dimensional Data}
\author{
Jose Cribeiro-Ramallo
\and
Vadim Arzamasov\and
Federico Matteucci\and
Denis Wambold\and
Klemens Böhm
\affiliations
Karlsruhe Institute of Technology
\emails
\{jose.cribeiro, vadim.arzamasov, federico.matteucci, klemens.boehm\}@kit.edu
\\
denis.wambold@student.kit.edu
}
\author{
First Author$^1$
\and
Second Author$^2$\and
Third Author$^{2,3}$\And
Fourth Author$^4$
\affiliations
$^1$First Affiliation\\
$^2$Second Affiliation\\
$^3$Third Affiliation\\
$^4$Fourth Affiliation
\emails
\{first, second\}@example.com,
}
\begin{document}

\maketitle

\begin{abstract} 
Outlier detection in high-dimensional tabular data is an important task in data mining, essential for many downstream tasks and applications. Existing unsupervised outlier detection algorithms face one or more problems, including inlier assumption (IA), curse of dimensionality (CD), and multiple views (MV). To address these issues, we introduce Generative Subspace Adversarial Active Learning (GSAAL), a novel approach that uses a Generative Adversarial Network with multiple adversaries. These adversaries learn the marginal class probability functions over different data subspaces, while a single generator in the full space models the entire distribution of the inlier class. GSAAL is specifically designed to address the MV limitation while also handling the IA and CD, being the only method to do so. We provide a comprehensive mathematical formulation of MV, convergence guarantees for the discriminators, and scalability results for GSAAL. Our extensive experiments demonstrate the effectiveness and scalability of GSAAL, highlighting its superior performance compared to other popular OD methods, especially in MV scenarios.
\end{abstract}

\section{Introduction}\label{sec::intro}
%JC: Dont use the world problem outse of MV CD IA. 
%JC: 
Outlier detection (OD), a fundamental and widely recognized issue in data mining, involves the identification of anomalous or deviating data points within a dataset. 
Outliers are typically defined as low-probability occurrences within a population~\cite{Wang-2019,Songqiao-2022}.
In the absence of access to the true probability distribution of the data points, OD algorithms rely on the construction of a scoring function. 
Points with higher scores are more likely to be outliers. 
As we will elaborate, existing unsupervised OD algorithms are susceptible to one or more of the following problems, in high-dimensional tabular data scenarios in particular. 
\begin{itemize}
    \item \emph{The inlier assumption} (IA): 
    OD algorithms may make assumptions about what constitutes an inlier, which can be challenging to verify and validate~\cite{LiuYezheng-2020}. 
    \item \emph{The curse of dimensionality} (CD): As the dimensionality of data increases, the challenge of identifying outliers intensifies, often resulting in a diminished effectiveness of certain OD algorithms~\cite{Bellman-1957}.
    \item \emph{Multiple Views} (MV): This alludes to the fact that outliers often are only visible in certain "views" of the data and are hidden in the full space of original features~\cite{Muller-2012-2}. 
\end{itemize}
We now explain these problems one by one.

\emph{The inlier assumption} poses a challenge to algorithms that assume a standard profile of the inlier data.
For example, angle-based algorithms like ABOD \cite{Kriegel-2008} assume that inliers have other inliers at all angles.
Similarly, neighbor-based algorithms like kNN~\cite{Ramaswamy-2000} assume that inliers have other neighboring data %FM: inliers
nearby.
These assumptions influence the scoring process, which is determined by measuring the degree to which a sample diverges from this assumed norm.
While these algorithms can be effective under their specific assumptions, their performance can degrade when these assumptions do not hold~\cite{LiuYezheng-2020}. Ideally, a method should be free of any reliance on inlier assumptions for more robust applicability.

\emph{The curse of dimensionality}~\cite{Bellman-1957} refers to the decrease in the relative proximity of data points as the number of dimensions increases. Simply put, as it increases, the distinctiveness of each point's location decreases, making distances between points less meaningful. This effect is particularly problematic for OD algorithms that rely on distances or on identifying neighbors to detect outliers, such as density- (e.g., LOF~\cite{Breunig-2000}), neighbor- (e.g., kNN~\cite{Ramaswamy-2000}), and cluster-based (e.g., SVDD~\cite[Chapter~2]{aggarwal-2017}) OD algorithms.

\emph{Multiple Views} refers to the phenomenon that certain complex correlations between features are only observable in some feature subspaces~\cite{Muller-2012-2}.
As detailed in~\cite{aggarwal-2017}, this occurs when the dataset contains additional irrelevant features, making some outliers only detectable in certain subspaces. 
In scenarios where multiple subspaces contain different interesting structures, this problem is exacerbated. 
It then becomes increasingly difficult to explain the variability of a data point based solely on its behavior in a single subspace~\cite{Keller-2013}. This problem is different from the curse of dimensionality as it can occur independently of the dimensionality of the dataset, and it can be mitigated with more data points.  We showcase all the presented issues of a detector in the following example:

\begin{figure}
\centering
\includegraphics[width = 1\linewidth]{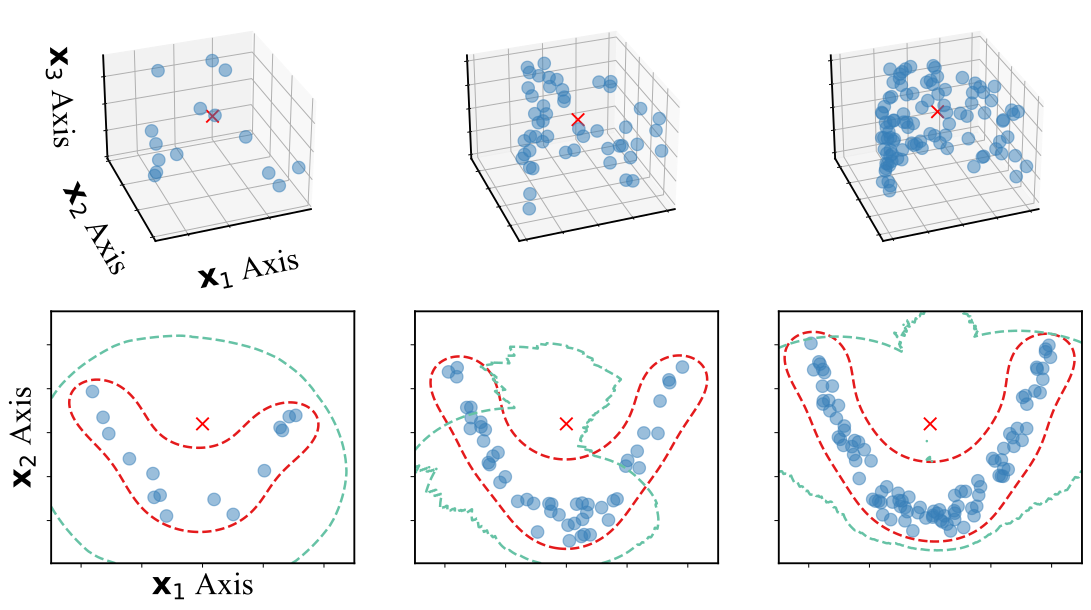}
\caption{Scatterplots of the dataset from example \ref{ex::intro}.}
\label{fig::intro_comparison}
\end{figure}

\begin{example}[Effect of MV, IA and CD]\label{ex::intro}
Consider the random variables $\mathbf{x}_1,\mathbf{x}_2$ and $\mathbf{x}_3$, where $\mathbf{x}_1$ and $\mathbf{x}_2$ are highly correlated and $\mathbf{x}_3$ is just Gaussian noise. Figure \ref{fig::intro_comparison} contains plots with different numbers of realizations for $(\mathbf{x}_1,\mathbf{x}_2,\mathbf{x}_3)$. We also plotted the classification boundaries from both a locality-based method (green) and a cluster-based method (red) in the subspace. If one fits the cluster-based detector in the full space of the data, the outlier shown in the figure (red cross) would not be detectable. However, the outlier is always detectable in the subspace, as we can see. Once we increase the number of samples over $n=1000$, the method detects the outlier in the full space (MV). On the contrary, the locality-based method could not detect the outlier in any tested scenario (MV + IA).
If we increase the dimensionality by adding more features consisting of noise, no method can detect the outlier in the full space (MV + IA + CD). 
\end{example}
We are interested in tackling outlier detection whenever a population exhibits MV, like \cite{Muller-2012-2,Keller-2013,Kriegel-2009} and as showcased in \cite{aggarwal-2017}. 
Particularly, the goal of this paper is to propose the first outlier detection method that explicitly addresses IA, CD, and MV simultaneously. 

As we will explain in the next section, we build on Generative Adversarial Active Learning (GAAL)~\cite{Zhu-2017}, a widely used approach for outlier detection~\cite{LiuYezheng-2020,Guo-2021,Sinha-2019}. 
It involves training a Generative Adversarial Network (GAN) to mimic the distribution of outlier data, and it enhances the discriminator's performance through active learning~\cite{Settles-2009}, leveraging the GAN's data generation capability. 
GAAL methods avoid IA \cite{LiuYezheng-2020} 
and use the multi-layered structure of the GAN to overcome the curse of dimensionality \cite{poggio-2020}.
However, they may miss important subspaces, leading to the multiple views problem. Extending GAAL to also address MV is not immediately obvious. 
%FM: do you need the last sentence here? It's the same as below

\paragraph*{Challenges.} 
Addressing the Multiple Views problem by training a feature ensemble of GAN-based models is not trivial.
(1) The joint training of generators and discriminators in GANs requires careful monitoring to determine the optimal stopping point, a task that becomes daunting for large ensembles. 
(2) The generation of difficult-to-detect points in a subspace 
remains hard~\cite{Steinbuss-2017}. 
While several authors have proposed multi-adversarial architectures for GANs~\cite{Durugkar-2016,choi-2022}, none of them specifically address adversaries tailored to subspaces composed of feature subsets. Furthermore, these methods may not be suitable for GAAL since they do not have convergence guarantees for detectors, as we will explain.

\paragraph*{Contributions.} 
(1) We propose GSAAL (Generative Subspace Adversarial Active Learning), a novel GAAL method that uses multiple adversaries to learn the marginal class probability functions in different data subspaces. 
Each adversary focuses on a single subspace. 
Simultaneously, we train a single generator in the full space to approximate the entire distribution of the inlier class. 
(2) By giving the first mathematical formulation of the ``multiple views'' issue, we showed GSAAL's ability to mitigate the MV problem.
(3) We formulate the novel optimization problem                                 
for GSAAL and give convergence guarantees of each discriminator to the marginal distribution of their respective subspace. Additionally, we give complexity results for the scalability of our method.
(4) Through extensive experimentation, we corroborated our claims regarding scalability and suitability for MV.
(5) We tested our method on the one-class classification task (novelty detection) for outlier detection using 22 popular benchmark datasets. GSAAL outperformed all popular OD algorithms from different families and is orders of magnitude faster in inference than its best competitor.
(6) Our code for the experiments and all examples is publicly available.\footnote{https://github.com/WamboDNS/GSAAL}

Paper outline: Section \ref{sec::rw} reviews related work, Section \ref{sec::gsaal} contains the theoretical results for our method, Section \ref{sec::experiments} features our experimental results, and Section \ref{sec::lim&conc} concludes and addresses limitations. 
\section{Related Work} \label{sec::rw}

\begin{table}
\centering
\begin{tabular}{lllll}
\toprule
Type                & IA & CD         & MV         \\ \midrule
Classical           & \textcolor{BrickRed}{\ding{55}}     & \textcolor{BrickRed}{\ding{55}}     & \textcolor{BrickRed}{\ding{55}}     \\
Subspace            & \textcolor{BrickRed}{\ding{55}}     & \textcolor{ForestGreen}{\checkmark} & \textcolor{ForestGreen}{\checkmark} \\
Generative w/ uniform distribution   & \textcolor{ForestGreen}{\checkmark}    & \textcolor{BrickRed}{\ding{55}}  & \textcolor{BrickRed}{\ding{55}}     \\
Generative w/ param. distribution   & \textcolor{BrickRed}{\ding{55}}    & \textcolor{ForestGreen}{\checkmark}     & \textcolor{BrickRed}{\ding{55}}     \\
Generative w/ subspace behavior & \textcolor{BrickRed}{\ding{55}}     & \textcolor{ForestGreen}{\checkmark} & \textcolor{ForestGreen}{\checkmark} \\
GAAL                & \textcolor{ForestGreen}{\checkmark} & \textcolor{ForestGreen}{\checkmark} & \textcolor{BrickRed}{\ding{55}}     \\ %\hline
\textbf{GSAAL} (Our method)  & \textcolor{ForestGreen}{\checkmark} & \textcolor{ForestGreen}{\checkmark} & \textcolor{ForestGreen}{\checkmark} \\ \bottomrule  
\end{tabular}
\caption{Families of OD methods with the limitations they address.}
\label{tab:summary_rw}
\end{table}    

This section is a brief overview of popular unsupervised outlier detection methods related to our approach. We categorize them based on their ability to address the specific limitations outlined above. Table~\ref{tab:summary_rw} is a comparative summary.

\subsubsection{Classical Methods}
Conventional outlier detection approaches, such as distance-based strategies like LOF and KNN, angle-based techniques like ABOD, and cluster-based methods like SVDD, rely on specific assumptions on the behavior of inlier data. 
They use a scoring function to measure deviations from this assumed norm. 
These methods face the \emph{inlier assumption} limitation by definition. For example, local methods that assume isolated outliers fail 
when several outlying samples fall together. 
In addition, many classical methods, which rely on measuring distances, are susceptible to the \emph{curse of dimensionality}. Both limitations impair the effectiveness and efficiency of these methods~\cite{LiuYezheng-2020}. 

\subsubsection{Subspace Methods}
Subspace-based methods~\cite{Kriegel-2009} operate in lower-dimensional subspaces formed by subsets of features. 
They effectively counteract the curse of dimensionality by focusing on identifying so-called ``subspace outliers''~\cite{Muller-2012}. These outliers, which are prevalent in high-dimensional datasets with many correlated features, 
are often elusive to conventional non-subspace methods~\cite{LiuFey-2008,Muller-2012-2}. However, existing subspace methods inherently operate on specific assumptions on the nature of anomalies in each subspace they explore, and thus face the \emph{inlier assumption} limitation.

\subsubsection{Generative Methods}
A common strategy to mitigate the IA and CD limitations is to reframe the task as a classification task using self-supervision. A prevalent self-supervised technique, particularly for tabular data, is the generation of artificial outliers~\cite{ElYaniv-2006,LiuYezheng-2020}. This method involves distinguishing between actual training data and artificially generated data drawn from a predetermined ``reference distribution''. \cite{Hempstalk-2008} showed that by approximating the class probability of being a real sample, one approximates the probability function of being an inlier.
One then uses this approximation as a scoring function \cite{LiuYezheng-2020}. 
However, it is not easy to find the right reference distribution, and a poor choice can affect OD by much~\cite{Hempstalk-2008}. 

A first approach to this challenge proposed the use of naïve reference distributions by uniformly generating data in the space. 
This approach showed promising results in low-dimensional spaces but failed in high dimensions due to the curse of dimensionality~\cite{Hempstalk-2008}. 
Other approaches, such as assuming parametric distributions for inlier data~\cite[Chapter~2]{aggarwal-2017} or directly generating in susbpaces~\cite{Desir-2013}, can avoid CD when the parametric assumptions are met. 
Methods that generate in the subspaces can model the subspace behavior, additionally tackling the MV limitation.
However, these last two approaches do not address the IA limitation, 
as they make specific assumptions about the behavior of the inlier data.

\subsubsection{Generative Adversarial Active Learning}
According to \cite{Hempstalk-2008}, 
the closer the reference distribution is to the inlier distribution, the better the final approximation to the inlier probability function will be. 
Hence, recent developments in generative methods have focused on learning the
reference distribution in conjunction with the classifier. 
A key approach is the use of Generative Adversarial Networks (GANs), where the generator converges to the inlier distribution~\cite{Goodfellow-2014}.
The most common approaches for this are GAAL-based methods~\cite{LiuYezheng-2020,Guo-2021,Sinha-2019}. These methods differentiate themselves from other GANs for OD by training the detectors using active learning after normal convergence of the GAN~\cite{Schlegl-2017,donahue-2017}. 
The architecture of GAAL inherently addresses the curse of dimensionality, as GANs can incorporate layers designed to manage high-dimensional data \cite{poggio-2020}. 
In practice, GAAL-based methods outperformed all their competitors in their original work. 
However, they overlook the behavior of the data in subspaces and therefore may be susceptible to MV. 

Our method, GSAAL, incorporates several subspace-focused detectors into GAAL. These detectors approximate the marginal inlier probability functions of their subspaces. Thus, GSAAL effectively addresses MV while inheriting GAAL's ability to overcome IA and CD limitations.

\section{Our Method: GSAAL}\label{sec::gsaal}
We first formalize the notion of data exhibiting multiple views. We then use it to design our %generative %VA: because in the previous section generative methods were separated from GAAL
outlier detection method, GSAAL, and give convergence guarantees.
Finally, we derive the runtime complexity of GSAAL. All the proofs and extra derivations can be found in the technical appendix. 

\subsection{Multiple Views}
Several authors~\cite{aggarwal-2017,Muller-2012-2,Keller-2013,Kriegel-2009,LiuFey-2008} have observed that at times the variability of the data can only be explained from its behavior in some subspaces. %KB: Consider rephrasing: "... the behavior in some subspaces suffices to explain the variability of the data ..." (if this is what you mean)
Researchers variably call this problem ``the subspace problem'' \cite{aggarwal-2017,Kriegel-2009} or ``multiple views of the data'' \cite{Muller-2012,Muller-2012-2}.
Previous research has largely focused on practical scenarios, leaving aside the need for a formal definition. 
In response, we propose a unifying definition of ``multiple views'' that provides a foundation for developing methods to address this challenge effectively. %KB:  "unifying" -- what do you mean? Maybe simply omit.

The problem ``multiple views'' of data (MV) arises from two different effects. %VA: in the end we need to unify our reference to MV (problem,concept, phenomenon, or limitation?) 
First, it involves the ability to understand the behavior of a random vector $\mathbf{x}$ by examining lower-dimensional subsets of its components $(\mathbf{x}_1,\dots,\mathbf{x}_d)$. %KB: Replace "involves" with "requires"?
Second, it stems from the challenge of insufficient data to obtain an effective scoring function in the full space of $\mathbf{x}$. %KB: "Second, there is the challenge ..."?
As Example~\ref{ex::intro} shows,
combining these two effects obscures the behavior of the data in the full space. Hence, methods not considering subspaces when building their scoring function may have issues detecting outliers under MV. 
The next definition formalizes the first effect.
\begin{definition}[myopic distribution] \label{def::mv}
    Consider a random vector $\mathbf{x}: \Omega\longrightarrow\mathbb{R}^d$ and 
    $\textit{Diag}_{d\times d}(\{0,1\})$, the set of diagonal binary matrices without the identity. 
    If there exists a random matrix $\mathbf{u}: \Omega\longrightarrow\textit{Diag}_{d\times d}(\{0,1\})$, such that
\begin{equation}\label{eq::mv}
p_\mathbf{x}(x) = p_\mathbf{ux}(ux) \text{ for almost all $x$,}
\end{equation} 
    we say that the distribution of $\mathbf{x}$ is \emph{myopic to the views of} $\mathbf{u}$.
    Here, $x$ and $ux$ are realizations of $\mathbf{x}$ and $\mathbf{ux}$, and $p_\mathbf{x}$ and $p_\mathbf{ux}$ are the \textrm{pdf}s of $\mathbf{x}$ and $\mathbf{ux}$. 
\end{definition}

It is clear that, under MV, using $p_\mathbf{ux}$ to build a scoring function instead of $p_\mathbf{x}$ mitigates the effects. %KB: Replace "the effects" with something more concrete (even if there is some redundancy).
This comes as the subspaces selected by $\mathbf{u}$ are smaller in dimensionality. Hence it should take fewer samples to approximate the \textrm{pdf} of $\mathbf{ux}$.
The difficulty is that it is not yet clear how to approximate $p_\mathbf{ux}$. The following proposition elaborates on a way to do so. 
It states that by averaging a collection of marginal distributions of $\mathbf{x}$ in the subspaces given by realizations of $\mathbf{u}$, one can approximate the distribution of $p_\mathbf{ux}$.

\begin{proposition}\label{prop::statistic}
    Let $\mathbf{x}$ and $\mathbf{u}$ be as before with $p_\mathbf{x}$ myopic to the views of $\mathbf{u}$. Consider a set of independent realizations of $\mathbf{u}$: $\{u_i\}_{i=1}^{k}$. Then $\frac{1}{k} \sum_{i} p_{u_i\mathbf{x}}(u_ix)$ is a sufficient statistic for $p_{\mathbf{ux}}(ux)$.
\end{proposition}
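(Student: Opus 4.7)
The plan is to identify $p_{\mathbf{ux}}$ as a finite mixture and then recognize the proposed empirical average as its canonical estimator, before promoting unbiasedness to sufficiency.

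First, I would use the independence of $\mathbf{u}$ and $\mathbf{x}$ (implicit in the definition of the view matrix) together with the law of total probability to decompose
$$p_{\mathbf{ux}}(z) \;=\; \sum_{u\in \textit{Diag}_{d\times d}(\{0,1\})} \Pr(\mathbf{u}=u)\, p_{u\mathbf{x}}(z).$$
This casts $p_{\mathbf{ux}}$ as a discrete mixture of the view-marginals $p_{u\mathbf{x}}$ over the finite index set. Combined with the myopia hypothesis $p_{\mathbf{x}} = p_{\mathbf{ux}}$, the target density coincides with the $\mathbf{u}$-average of its marginals across admissible views.

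Second, since $\{u_i\}_{i=1}^{k}$ are i.i.d.\ realizations of $\mathbf{u}$, each summand $p_{u_i\mathbf{x}}(u_i x)$ is an i.i.d.\ copy of the random variable $p_{\mathbf{u}\mathbf{x}}(\mathbf{u} x)$ for fixed $x$. Taking expectation and invoking the mixture representation from the first step yields
$$\mathbb{E}\!\left[\frac{1}{k}\sum_{i=1}^{k} p_{u_i\mathbf{x}}(u_i x)\right] \;=\; p_{\mathbf{ux}}(ux),$$
so the statistic is unbiased; the strong law of large numbers then delivers almost-sure convergence as $k\to\infty$.

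Third, to upgrade this to sufficiency, I would apply the Fisher–Neyman factorization theorem. The joint pmf of $(u_1,\ldots,u_k)$ under the mixture factorizes as a product over the sampled views, and the dependence on the functional of interest $p_{\mathbf{ux}}(ux)$ can be isolated into a factor that depends on the sample only through $T=\tfrac{1}{k}\sum_i p_{u_i\mathbf{x}}(u_i x)$, with a residual combinatorial factor independent of the target value. The most delicate step, and the main obstacle I foresee, is that ``sufficiency'' is being claimed for a \emph{scalar} functional against an infinite-dimensional nuisance parameter (the collection $\{p_{u\mathbf{x}}\}_u$). I would therefore phrase the argument in a Rao–Blackwell style, verifying that conditioning any alternative unbiased estimator on $T$ does not reduce its information about $p_{\mathbf{ux}}(ux)$, and handle the fact that $p_{u\mathbf{x}}$ lives on the lower-dimensional support indexed by $u$ by restricting densities to the nonzero coordinates selected by each $u_i$ before invoking the factorization.
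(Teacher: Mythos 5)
Your proof is correct and rests on the same two pillars as the paper's --- a representation of $p_{\mathbf{ux}}(ux)$ as an average of the view-marginals $p_{u\mathbf{x}}(ux)$ over the law of $\mathbf{u}$, followed by the observation that $\frac{1}{k}\sum_i p_{u_i\mathbf{x}}(u_ix)$ is the sample mean of i.i.d.\ copies of that quantity --- but you reach the first pillar by a genuinely shorter route. The paper introduces the complementary view $\mathbf{w}=1_d-\mathbf{u}$, writes $p_{\mathbf{ux}}(ux)=\mathbb{E}_{\mathbf{wx}}\bigl[p_{\mathbf{ux}\mid\mathbf{wx}}(ux\mid wx)\bigr]$, and then spends most of the proof on conditional-probability manipulations (invoking myopicity and the independence of the $u_i$) to establish the key lemma $p_{\mathbf{ux}\mid w_i\mathbf{x}}(ux\mid w_ix)=p_{u_i\mathbf{x}}(u_ix)$; your direct application of the law of total probability over the discrete variable $\mathbf{u}$ yields the mixture $p_{\mathbf{ux}}(z)=\sum_u \Pr(\mathbf{u}=u)\,p_{u\mathbf{x}}(z)$ in one line and dispenses with that lemma entirely (with the side effect that myopicity plays no essential role in the proposition itself, only in the later claim that the statistic also approximates $p_{\mathbf{x}}$). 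You are also more candid about the final step: the paper simply asserts that ``the sample mean is a sufficient estimator of the expected value,'' whereas you correctly flag that promoting unbiasedness and consistency to \emph{sufficiency} for a scalar functional against the infinite-dimensional nuisance $\{p_{u\mathbf{x}}\}_u$ would require a Fisher--Neyman or Rao--Blackwell argument that neither you nor the paper actually carries out; what both arguments genuinely establish is that the statistic is an unbiased, consistent estimator of $p_{\mathbf{ux}}(ux)$.
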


MV appears when there is a lack of data, and its distribution is myopic. To improve OD under MV, one can exploit the distribution myopicity to model $\mathbf{x}$ in the subspaces, where less data is sufficient. Proposition \ref{prop::statistic} gives us a way to do so, by approximating $p_\mathbf{ux}$. In this way, under myopicity, this also approximates $p_\mathbf{x}$, avoiding MV. Our method, GSAAL, exploits these derivations, as we explain next.

% We now give an example of a population verifying the previous definition and one that does not satisfy it for the same $\textbf{u}$.
% \begin{example}[Continuation of example \ref{ex::intro}] %JC: I think that the example can now go. I think it interrupts the flow of the text too much. 
%     Let's now consider a similar population to that of example \ref{ex::intro}, this time let $\mathbf{z}'= \mathbf{x} + \mathbf{y} + Noise$. We will call $(\mathbf{x},\mathbf{y},\mathbf{z})$ population 1 and $(\mathbf{x},\mathbf{y},\mathbf{z}')$ population 2. Let us consider $\mathbf{u}$ as the random matrix with an atomic distribution on $u = diag(1,1,0)$, and the maximum mean discrepancy (MMD) with the identity kernel as the statistical distance. Figure \ref{fig::mmd} contains the results from the approximation of the MMD based on the number of samples drawn. As one can see, population 1 exhibits multiple views considering the selected statistical distance, as it verifies equation (\ref{def::mv}). On the contrary, population 2 does not verify equation (\ref{def::mv}) for the selected \textbf{u}, so we can not say that it exhibits such behavior. 
% \end{example}

% \begin{figure}
%     \centering
%     \includesvg[width = 1\linewidth]{images/MMD_comparisong.svg}
%     \caption{Approximation of the value of (\ref{eq::mv}) for two different populations}
%     \label{fig::mmd}
% \end{figure} 

\subsection{GSAAL}
GAAL methods tackle IA by being agnostic to outlier definition and mitigate CD through the use of multilayer neural networks~\cite{LiuYezheng-2020,LiChun-2017,poggio-2020}. GAAL methods have two steps:
\begin{enumerate}
    \item \emph{Training of the GAN}. 
    Train the GAN consisting of one generator $\mathcal{G}$ and one detector $\mathcal{D}$ using the usual $\min$-$\max$ optimization problem as in~\cite{Goodfellow-2014}.
    
    \item \emph{Training of the detector through active learning}.  
    After convergence, $\mathcal{G}$ is fixed, and $\mathcal{D}$ continues to train. %This last step was proven to be an active learning procedure by \cite{Zhu-2017}. 
    This last step is an active learning procedure with \cite{Zhu-2017}. 
    Following \cite{Hempstalk-2008}, $\mathcal{D}(x)$ now approximates the \textrm{pdf} of the training data $p_\mathbf{x}$. \label{enum::gaal_step2} %KB: 'active learning' is mentioned here not for the first time, but its role and its functioning is somewhat unclear, at least to me. For instance, what is the oracle? Which queries are particularly interesting and will be submitted to the oracle? %JC: Following active learning notation, we would be doing query synthesis utilizing \mathcal{G} as an oracle, which we assume perfect (i.e., its output is always interesting).
    %FM: I'd add that the detector(s) is trained on data sampled from the generator 
    %KB: Yes, please say briefly what the oracle is.
\end{enumerate}
After Step~\ref{enum::gaal_step2}, the detector converges to $p_\mathbf{x}$. However, our goal is to approximate $p_{\mathbf{x}}$ by exploiting a supposed myopicity of the distribution. 
We extend GAAL methods to also address MV in what follows.
The following theorem adapts the objective function of the GAN to the subspace case and gives guarantees that the detectors converge to the marginal \textrm{pdf}s used in Proposition \ref{prop::statistic}:
\begin{theorem}\label{th::gsaal}
    Consider $\mathbf{x}$ and $\mathbf{u}$ as in the previous definition, with $x$ a realization of $\mathbf{x}$ and $\{u_i\}_i$ a set of realizations of $\mathbf{u}$.
    Consider a generator $\mathcal{G}:z\in Z\longmapsto \mathcal{G}(z) \in \mathbb{R}^d$ and $\{\mathcal{D}_i\}$, 
    $i = 1,\dots,k$, a set of detectors such as 
    $\mathcal{D}_i: u_ix \in S_i\subset \mathbb{R}^d \longmapsto \mathcal{D}_i(u_ix)\in [0,1]$.  
    $Z$ is an arbitrary noise space where $\mathcal{G}$ randomly samples from.
    Consider the following optimization problem \begin{equation}\label{eq::objgsaal}
    \begin{split}
        &\underset{\mathcal{G}}{\min} \underset{\mathcal{D}_i,~\forall i}{\max} \sum_i V(\mathcal{G},\mathcal{D}_i) =\\ %KB: What is V? (I.e., what is it intuitively?
        &\underset{\mathcal{G}}{\min} \underset{\mathcal{D}_i,~\forall i}{\max} \sum_i \mathbb{E}_{u_i\mathbf{x}}\log \mathcal{D}_i(u_ix) + \mathbb{E}_z \log\left(1 - \mathcal{D}_i\left(u_i \mathcal{G}(z)\right)\right), 
    \end{split}
    \end{equation} %KB: I must admit that I am quite lost here. Do we have space for some intuitive explanations? J For instance, following an explanation of what V is for, why is it split into two addends? %JC: Added a short sentence stating that V is the normal optimization function of a GAN adapted to the subspace case.
    %KB: Are you sure? J I have not found this sentence.
    where each addend $V(\mathcal{G},\mathcal{D}_i)$ is the binary cross entropy in each subspace.
    Under these conditions, the following holds: 
\begin{itemize}
        \item [$i)$] Each detector in optimum is $\mathcal{D}^*_i(u_ix)=\frac{1}{2}, \forall x$. Thus, in optimum $V(\mathcal{G}, \mathcal{D}_i) = -\log(4), \forall i.$\label{th::gsaal::i}
        
        \item[$ii)$] Each individual $\mathcal{D}_i$ converges to $\mathcal{D}^*_i(u_ix)=p_{u_ix}(u_ix)$ after trained in Step \ref{enum::gaal_step2} of a GAAL method.\label{th::gsaal::ii} 
        \item[$iii)$]  $\mathcal{D}^*(x) = \frac{1}{k}\sum_{i= 1}^k \mathcal{D}^*_i(u_i \mathbf{x})$ approximates $p_\mathbf{ux}(ux)$. If $p_\mathbf{x}$ is myopic, $\mathcal{D}^*(x)$ also approximates $p_\mathbf{x}(x)$.\label{th::gsaal::iii} 
    \end{itemize}
\end{theorem}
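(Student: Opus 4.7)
The plan is to dispatch the three claims in order, leaning on the Goodfellow-style equilibrium argument for (i), a Hempstalk-style reference-density inversion for (ii), and Proposition~\ref{prop::statistic} for (iii).

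For part~(i), the key observation is that the objective in \eqref{eq::objgsaal} decomposes as a sum of $k$ binary-cross-entropy terms, each involving only its own $\mathcal{D}_i$, so the inner maximisation factorises across $i$. Fixing $\mathcal{G}$ and differentiating each summand pointwise with respect to $\mathcal{D}_i(u_ix)$ gives the per-subspace Bayes-optimal classifier
\[
\mathcal{D}_i^{**}(u_ix)=\frac{p_{u_i\mathbf{x}}(u_ix)}{p_{u_i\mathbf{x}}(u_ix)+p_{u_i\mathcal{G}(\mathbf{z})}(u_ix)},
\]
exactly as in \cite{Goodfellow-2014}. Substituting back and rewriting each term via the Jensen--Shannon divergence yields $V(\mathcal{G},\mathcal{D}_i^{**})=-\log 4+2\,\mathrm{JSD}\!\left(p_{u_i\mathbf{x}}\,\|\,p_{u_i\mathcal{G}(\mathbf{z})}\right)\geq -\log 4$. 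The outer minimum over $\mathcal{G}$ is attained simultaneously in every summand when $\mathcal{G}$ matches $p_\mathbf{x}$ in the full space, which forces every marginal $p_{u_i\mathcal{G}(\mathbf{z})}$ to coincide with $p_{u_i\mathbf{x}}$ and yields $\mathcal{D}_i^*\equiv \tfrac{1}{2}$ and $V(\mathcal{G}^*,\mathcal{D}_i^*)=-\log 4$, as claimed.

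For part~(ii), I would adapt the argument of \cite{Hempstalk-2008} to each subspace separately. After Step~1 the fixed generator supplies a reference distribution $p_{u_i\mathcal{G}(\mathbf{z})}$ in subspace $i$; in Step~\ref{enum::gaal_step2} each $\mathcal{D}_i$ continues training as a binary classifier distinguishing real $u_i\mathbf{x}$ samples from generated $u_i\mathcal{G}(\mathbf{z})$ samples produced by the active-learning sampler of \cite{Zhu-2017}. The Bayes-optimal classifier is the class-posterior $\mathcal{D}_i(u_ix)=p_{u_i\mathbf{x}}/(p_{u_i\mathbf{x}}+p_{u_i\mathcal{G}(\mathbf{z})})$; applying the Hempstalk inversion with the now-known $p_{u_i\mathcal{G}(\mathbf{z})}$ recovers the marginal density and gives $\mathcal{D}_i^*(u_ix)=p_{u_i\mathbf{x}}(u_ix)$ in the scoring normalisation of \cite{LiuYezheng-2020}. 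Part~(iii) is then bookkeeping: substituting (ii) into the aggregate detector gives $\mathcal{D}^*(x)=\tfrac{1}{k}\sum_{i}p_{u_i\mathbf{x}}(u_ix)$, which Proposition~\ref{prop::statistic} identifies as a sufficient statistic for $p_{\mathbf{ux}}(ux)$; under myopicity, Definition~\ref{def::mv} collapses $p_{\mathbf{ux}}$ onto $p_{\mathbf{x}}$ almost everywhere, so $\mathcal{D}^*$ approximates $p_{\mathbf{x}}$ as well.

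I expect the hardest step to be (ii), for two reasons. First, the Bayes posterior is a probability in $[0,1]$ whereas the claimed limit $p_{u_i\mathbf{x}}(u_ix)$ is a density value, so the argument must fix a normalisation convention (or settle for a monotone rescaling, which is enough for a ranking-based OD score). Second, one must verify that the active-learning stream preserves the balanced-class condition in every subspace \emph{simultaneously}, so that the per-subspace classifier is unbiased rather than tilted by the sampling policy. Parts (i) and (iii) should then follow from the standard Goodfellow calculation and from Proposition~\ref{prop::statistic}, with essentially no new work beyond adapting the known arguments to the subspace-indexed ensemble.
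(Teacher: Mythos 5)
Your proposal is correct and follows essentially the same route as the paper's proof: the Goodfellow fixed-generator/JSD argument for (i), the Hempstalk two-class-classifier inversion for (ii), and Proposition~\ref{prop::statistic} plus myopicity for (iii). Your caveat about (ii) — that the Bayes posterior lies in $[0,1]$ while the claimed limit is a density value, so a normalisation convention is needed — is a fair point that the paper's own proof also leaves implicit.
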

Using Theorem \ref{th::gsaal} we can extend the GAAL methods to the subspace case: 
\begin{enumerate}
    \item \emph{Training the GAN}. Train a GAN with one generator $\mathcal{G}$ and multiple detectors $\{\mathcal{D}_i\}$ with Equation (\ref{eq::objgsaal}) as the objective function. The training of each detector stops when the loss reaches its value with the optimum in Statement $(i)$.
    \item \emph{Training of the $k$ detectors by active learning}. Train each $\mathcal{D}_i$ as in Step 2 of a regular GAAL method using $\mathcal{G}$. By Statement $(ii)$ of the Theorem, each $\mathcal{D}_i$ will approximate $p_{u_i\mathbf{x}}$. By Statement $(iii)$, $\mathcal{D}(x) = \frac{1}{k}\sum_{i= 1}^k \mathcal{D}_i(u_i \mathbf{x})$ will approximate $p_\mathbf{x}$ under the myopicity of the data.

\end{enumerate} 
We call this generalization of GAAL Generative Subspace Adversarial Active Learning (GSAAL).
The appendix contains the pseudo-code for GSAAL. 

\subsection{Complexity}\label{sec::complexity}
In this section, we focus on studying the theoretical complexity of GSAAL.  
We study both its usability for training and, more importantly, for inference. 
\begin{theorem}\label{th::complexity}
    Consider our GSAAL method with generator $\mathcal{G}$ and detectors $\{\mathcal{D}_i\}_{i=1}^k$, each with four fully connected hidden layers, $\sqrt{n}$ nodes in the detectors and $d$ in the generator. Let $D$ be the training data for GSAAL, with $n$ data points and $d$ features. Then the following holds:
    \begin{itemize}
        \item[$i)$] Time complexity of training is  $\mathcal{O}(E_D\cdot n \cdot (k \cdot n + d^2))$. $E_D$  is an unknown complexity variable depicting the unique epochs to convergence for the network in dataset $D$.
        \item[$ii)$] Time complexity of single sample inference is in $\mathcal{O}(k \cdot n)$, with $k$ the number of detectors used.  
    \end{itemize}
\end{theorem}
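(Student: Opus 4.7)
The argument reduces to an operation count over the two network types, scaled by the number of samples processed per epoch and by the number of epochs. I will use the standard fact that a fully connected layer with $a$ inputs and $b$ outputs requires $\Theta(ab)$ operations in either a forward or a backward pass, and that backpropagation has the same asymptotic cost as the forward pass.

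First I would account for the cost of a single pass through a detector $\mathcal{D}_i$. Its four hidden layers of $\sqrt{n}$ nodes yield three inner transitions of cost $\Theta(n)$ each, an input-to-first-hidden layer of cost $\mathcal{O}(d\sqrt{n})$, and a final hidden-to-output layer of cost $\mathcal{O}(\sqrt{n})$. Summing gives $\mathcal{O}(n + d\sqrt{n})$, which I would collapse to $\mathcal{O}(n)$ by observing that each $\mathcal{D}_i$ only sees the low-dimensional projection $u_i\mathbf{x}$, so the effective input dimension is much smaller than $d$ (equivalently, the dominant regime is $d = \mathcal{O}(\sqrt{n})$). Next, for the generator $\mathcal{G}$, the four hidden layers of $d$ nodes together with a $d$-dimensional output and a constant-size noise input produce $\Theta(d^2)$ per inner transition and $\mathcal{O}(d^2)$ overall.

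For part $(ii)$, inference on a single point evaluates every $\mathcal{D}_i$ once and averages to form $\mathcal{D}(x) = \frac{1}{k}\sum_i \mathcal{D}_i(u_i x)$, at total cost $k\cdot \mathcal{O}(n) = \mathcal{O}(kn)$. For part $(i)$, one training epoch processes $n$ samples; for each sample the update requires a forward plus backward pass through $\mathcal{G}$ and through each of the $k$ detectors (the backward pass for $\mathcal{G}$ routes through the detectors but stays within the same asymptotic budget), so one sample costs $\mathcal{O}(d^2 + kn)$ and one epoch costs $\mathcal{O}(n(d^2 + kn))$. Multiplying by the $E_D$ epochs needed for convergence yields the stated bound $\mathcal{O}(E_D \cdot n \cdot (k\cdot n + d^2))$.

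The main obstacle, as I see it, is not any single calculation but rather the book-keeping. I need to ensure that both GSAAL training phases, the joint GAN optimisation of Equation~(\ref{eq::objgsaal}) and the subsequent active-learning refinement of Step~\ref{enum::gaal_step2}, reduce to the same per-pass cost so that no extra factor sneaks into $E_D$, and I must absorb the input-layer term $\mathcal{O}(d\sqrt{n})$ cleanly into the dominant $\mathcal{O}(n)$ so that the headline bounds $\mathcal{O}(kn)$ and $\mathcal{O}(E_D n (kn + d^2))$ hold without hidden factors of $d$. Once these conventions are fixed, the theorem follows by summation over layers, samples and epochs.
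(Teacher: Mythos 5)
Your proposal is correct and follows essentially the same route as the paper's proof: a layer-by-layer operation count giving $\mathcal{O}(n)$ per detector forward pass and $\mathcal{O}(d^2)$ for the generator, the observation that backpropagation matches the forward-pass cost asymptotically, and summation over detectors, samples and epochs. The only difference is that you explicitly surface the input-layer term $\mathcal{O}(d\sqrt{n})$ and justify absorbing it, whereas the paper's proof starts counting from ``the datapoint after passing the input layer'' and never confronts that term --- your treatment is the more careful one on this point.
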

The linear inference times make GSAAL particularly appealing in situations where the model can be trained once for each dataset, like one-class classification.
We build on this particular strength in the following section. 
%JC: I am looking for the sentence that justifies why we focused on Inference time complexity rather than Training time complexity in the experiments. Any suggestions?
%KB: Why have we focused on this, but not the other one in the first place? -- Isn't the existing sentence already a good stab at the target?
\section{Experiments}\label{sec::experiments}
This section presents experiments with GSAAL. We will outline the experimental setting, and examine the handling of ``multiple views'' in GSAAL and other OD methods. We then evaluate GSAAL's performance against various OD methods and investigate its sensitivity to the number of detectors and its scalability. We also added additional experiments with other competitors outside of our related work in the appendix of the article. Our experiments used an RTX 3090 GPU and an AMD EPYC 7443p CPU running Python in Ubuntu 22.04.3 LTS. Deep neural network methods were trained on the GPU and inferred on the CPU; shallow methods used only the CPU. 

%%%%%%%%%%%%%%%%%%%%%%%%%%%%%%%%%%%%%%%%%%%%%%%%%%%%%%%%%%%%%%%%%%%%%%%%%%%%%%%%%%%%

\subsection{Experimental Setting}\label{subsec::settings}
This section has three parts: First, we describe the synthetic and real data for the outlier detection experiments. Then, we describe the configuration of GSAAL. Finally, we present our competitors.  

%%%%%%%%%%%%%%%%%%%%%%%%%%%%%%%%%%%%%%%%%%%%%%%%%%%%%%%%%%%%%%%%%%%%%%%%%%%%%%%%%%%%

\subsubsection{Datasets}\label{subsec::datasets}
\paragraph{Synthetic.}
We constructed synthetic datasets, each containing two correlated features, $\mathbf{x}_1$ and $\mathbf{x}_2$, along with 58 independent features $\mathbf{x}_j$, $j=3,\dots,60$ consisting of Gaussian noise. This approach simulates datasets that exhibit the MV property by integrating irrelevant features into a pair of highly correlated variables. We detail the methodology and all different correlation patterns, in the technical appendix. 

%%%%%%%%%%%%%%%%%%%%%%%%%%%%%%%%%%%%%%%%%%%%%%%%%%%%%%%%%%%%%%%%%%%%%%%%%%%%%%%%%%%%

\paragraph{Real.} 
We selected 22 real-world tabular datasets for our experiments from~\cite{Songqiao-2022}. 
The selection criteria included datasets with less than 10,000 data points, more than 10 outliers, and more than 15 features, focusing on high-dimensional data while keeping the runtime (of competing OD methods) tractable. Table~\ref{tab::rwdatasets} contains the summary of the datasets. 
For datasets with multiple versions, we chose the first in alphanumeric order.
Details about each dataset are available in the original source~\cite{Songqiao-2022}. 

\begin{table}
\centering
\begin{tabular}{
>{\columncolor[HTML]{EFEFEF}}l r|
>{\columncolor[HTML]{EFEFEF}}l r}
\toprule
Dataset          & Category      & Dataset    & Category    \\ \midrule
20news    & Text          & MNIST      & Image       \\
Annthyroid       & Health  & MVTec   & Text      \\
Arrhythmia       & Cardiology    & Optdigits  & Image       \\
Cardiot.. & Cardiology    & Satellite  & Astronomy   \\
CIFAR10          & Image         & Satimage-2 & Astronomy   \\
F-MNIST     & Image         & SpamBase   & Document    \\
Fault            & Industrial    & Speech     & Linguistics \\
InternetAds      & Image         & SVHN       & Image       \\
Ionosphere      & Weather       & Waveform   & Elect. Eng. \\
Landsat          & Astronomy     & WPBC       & Oncology    \\
Letter           & Image         & Hepatitis  & Health      \\ \bottomrule
\end{tabular}
\caption{Real-world datasets used in the experiments}
\label{tab::rwdatasets}
\end{table}

%%%%%%%%%%%%%%%%%%%%%%%%%%%%%%%%%%%%%%%%%%%%%%%%%%%%%%%%%%%%%%%%%%%%%%%%%%%%%%%%%%%%

\subsubsection{Network Settings}
\paragraph{Structure.} 
Unless stated otherwise, GSAAL uses the following network architecture. It consists of four fully connected layers with ReLu activation functions used in the generator and the detectors. Each layer in $k=2\sqrt{d}$ detectors has $\sqrt{n}$ nodes, where $n$ and $d$ are the number of data points and features in the training set, respectively. This configuration ensures linear inference time. The generator has $d$ nodes in each layer, a standard in GAAL approaches, which ensures polynomial training times. We assumed $\mathbf{u}$ to be distributed uniformly across all subspaces. Therefore, we obtained each subspace for the detectors by drawing uniformly from the set of all subspaces.

%%%%%%%%%%%%%%%%%%%%%%%%%%%%%%%%%%%%%%%%%%%%%%%%%%%%%%%%%%%%%%%%%%%%%%%%%%%%%%%%%%%%

\paragraph{Training.} 
Like other GAAL methods \cite{LiuYezheng-2020,Zhu-2017}, we train the generator $\mathcal{G}$ together with all the detectors $\mathcal{D}_i$ until the loss of $\mathcal{G}$ stabilizes. Then we train each detector $\mathcal{D}_i$ until convergence with $\mathcal{G}$ fixed. To automate this process, we introduce an early stopping criterion: Training stops when a detector's loss approaches the theoretical optimum ($-\log(4)$), see statement $(ii)$ of Theorem \ref{th::gsaal}. For consistency across experiments, training parameters remain fixed unless otherwise noted. Specifically, the learning rates of the detectors and the generator are 0.01 and 0.001, respectively. We use minibatch gradient descent \cite{Goodfellow-2016} optimization, with a batch size of 500.

%%%%%%%%%%%%%%%%%%%%%%%%%%%%%%%%%%%%%%%%%%%%%%%%%%%%%%%%%%%%%%%%%%%%%%%%%%%%%%%%%%%%

\subsubsection{Competitors}

\begin{table}
\centering
\begin{tabular}{lr}
\toprule
Type                                    & Competitors \\ \midrule
Classical                               & kNN, LOF, ABOD, SVDD \\
Subspace                                & Isolation Forest, SOD \\
Gen., uniform dist.     & Not included (see the text) \\
Gen., parametric dist.   & GMM \\
Gen., subspace behavior         & Not included (see the text) \\
GAAL                                    & MO-GAAL \\ \bottomrule
\end{tabular}
\caption{Competitors in our experiments}
\label{tab:summary_competitors}
\end{table}

We selected popular and accessible methods from each category, as summarized in Table~\ref{tab:summary_competitors}, guided by related work. We excluded generative methods with uniform distributions because they prove ineffective for large datasets~\cite{Hempstalk-2008}. We could not include a representative for generative methods with subspace behavior due to operational issues with the most relevant method in this class, \cite{Desir-2013}, caused by its outdated repository.

We used the \texttt{pyod}~\cite{pkg::pyod} library to access all competitors except MO-GAAL. We used MO-GAAL from its original source and implemented our method GSAAL in \texttt{keras}~\cite{pkg::keras}.

%%%%%%%%%%%%%%%%%%%%%%%%%%%%%%%%%%%%%%%%%%%%%%%%%%%%%%%%%%%%%%%%%%%%%%%%%%%%%%%%%%%%

\subsection{Effect of Multiple Views on Outlier Detection}\label{sec::mv}
To demonstrate the effectiveness of GSAAL under MV, we use synthetic datasets. Visualizing the outlier scoring function in a 60-dimensional space is challenging, so we project it into the $\mathbf{x}_1$-$\mathbf{x}_2$ subspace. 
A method adept at handling MV should have a boundary that accurately reflects the $\mathbf{x}_1$ and $\mathbf{x}_2$ dependency structure.
The procedure is as follows: 
\begin{enumerate}
\item Generate a synthetic dataset $D^\text{synth}$ as described in section~\ref{subsec::datasets} and train the OD model.
\item Using this model, compute the scores for the points $(x_1, x_2, 0, \dots, 0)$ and visualize the level curves on the $\mathbf{x}_1$-$\mathbf{x}_2$ plane.
\end{enumerate}

Figure~\ref{fig::mv} shows results for selected datasets and competitors, which are detailed in the Appendix. It shows the level curves and decision boundaries (dashed lines) of the methods. Notably, our model effectively detects correlations in the right subspace. For example, in the \emph{banana} dataset, GSAAL accurately creates a banana-shaped boundary and outperforms other methods in distinguishing outliers from inliers in this subspace.

\begin{figure}
    \begin{subfigure}{1\linewidth}
        \centering
        \includegraphics[width = 1\linewidth]{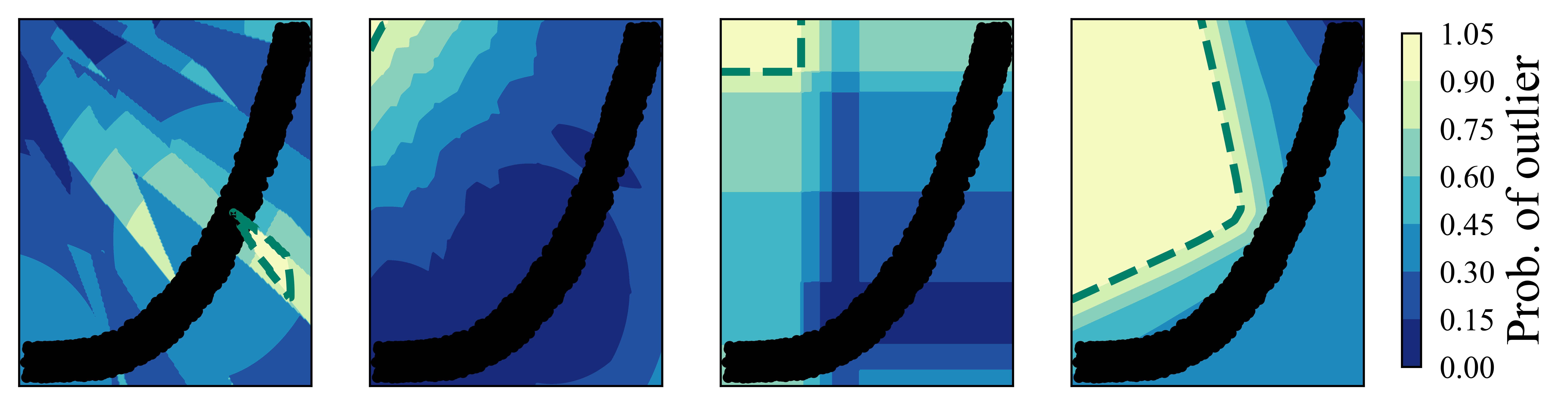}
    \end{subfigure}\\
    \begin{subfigure}{1\linewidth}
        \centering
        \includegraphics[width = 1\linewidth]{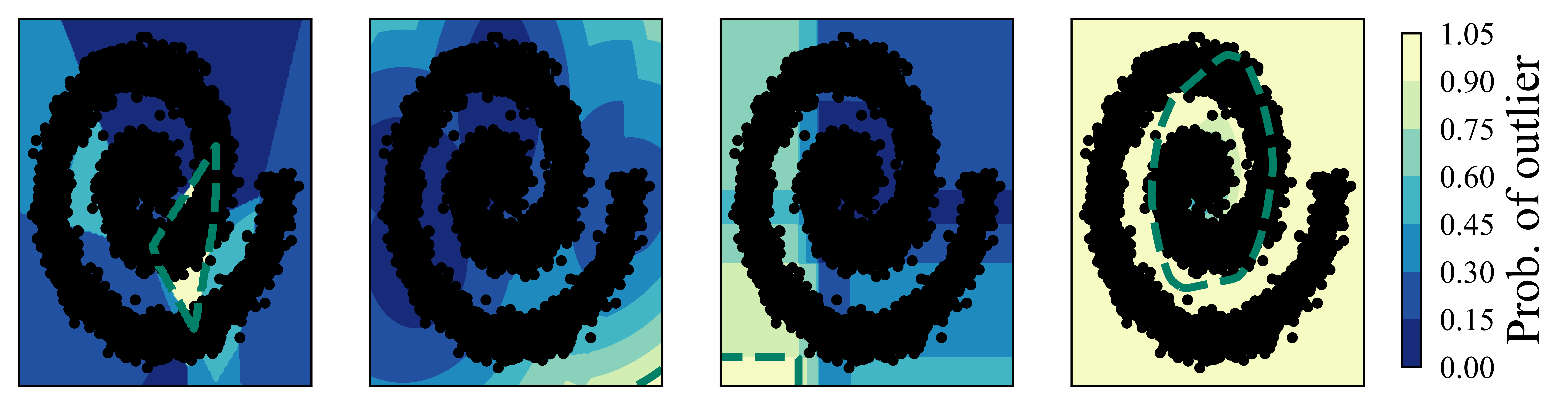}
    \end{subfigure}\\
    \begin{subfigure}{1\linewidth}
        \centering
        \includegraphics[width = 1\linewidth]{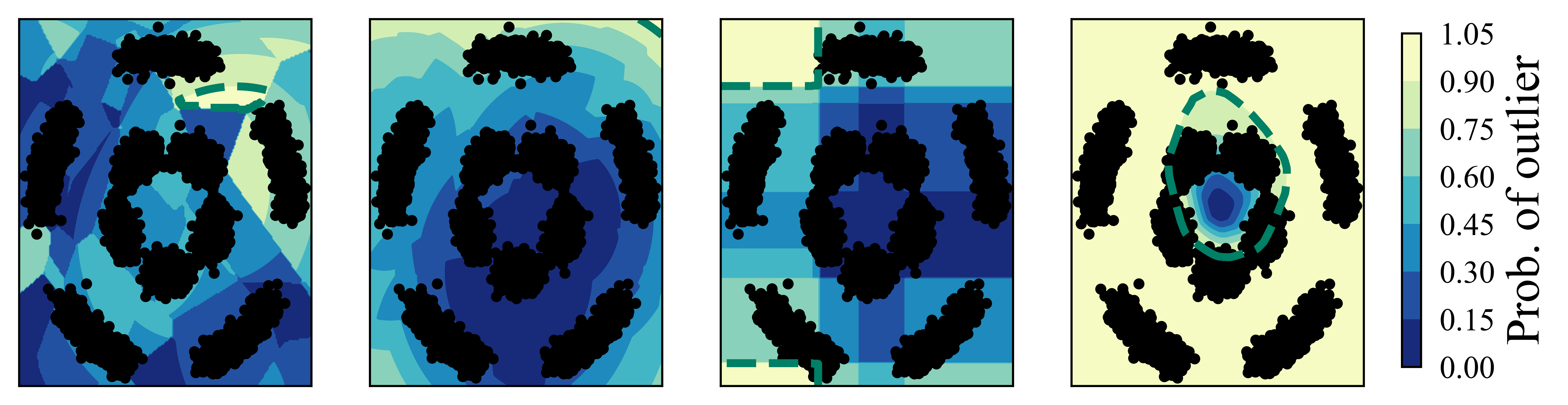}
        \caption{ABOD ~ (b) kNN ~  (c) IForest ~ (d) GSAAL}
    \end{subfigure}
    \caption{Projected classification boundaries for datasets \emph{banana}, \emph{spiral}, and \emph{star}.}
    \label{fig::mv}
\end{figure}

%%%%%%%%%%%%%%%%%%%%%%%%%%%%%%%%%%%%%%%%%%%%%%%%%%%%%%%%%%%%%%%%%%%%%%%%%%%%%%%%%%%%

\subsection{One-class Classification}\label{sec::occ}
This section evaluates GSAAL on a one-class classification task \cite{seliya-2021}. First, we study the effectiveness of GSAAL on real data. Then, we explore the impact of parameter variations on the model's performance. Finally, we investigate the scalability of GSAAL in practical scenarios.

\subsubsection{Real-world Performance}
We perform the outlier detection experiments on real datasets. Specifically, we take on the task of one-class classification, where the goal is to detect outliers by training only on a collection of inliers \cite{Songqiao-2022}. 
To evaluate the performance of OD methods, we use AUC as it is resilient to test data imbalance, a common issue in OD tasks~. %FM: stick to either ROC AUC or AUC
The procedure is as follows:
\begin{enumerate}
\item Split the dataset $D$ into a training set $D^\text{train}$ containing $80\%$ of the inliers from $D$, and a test set $D^\text{test}$ containing the remaining inliers and all outliers.
\item Train an outlier detection model with $D^\text{train}$ and evaluate its performance on $D^\text{test}$ with ROC AUC.
\end{enumerate}

To save space, we moved the detailed AUC results to the appendix; Figure~\ref{fig::boxplots} summarizes them. It shows that GSAAL achieves the lowest median rank. Although other subspace methods
tend to perform better with irrelevant attributes~\cite{LiuFey-2008,Kriegel-2009}, they did not outperform classical OD methods on average in our experiments. Notably, ABOD, the second best method in our experiments, performed poorly in the MV tests (Section \ref{sec::mv}). 

For statistical comparisons, we use the Conover-Iman post hoc test for pairwise comparisons between multiple populations~\cite{conover-1979}. It is superior to the Nemenyi test due to its improved type I error boundings~\cite{conover-1999}. 
Conover-Iman test requires a preliminary positive result from a multiple population comparison test, for which we employ the Kruskal-Wallis test~\cite{kruskal-1952}.

Table \ref{tab::connover} shows the test results. In each cell, `$+$' indicates that the method in the row has a significantly lower median rank than the method in the column, while `$-$' indicates a significantly higher median rank. One symbol indicates p-values $\le 0.15$ and two symbols indicate p-values $\le 0.05$. A blank indicates no significant difference. The table shows that GSAAL is superior to most of its competitors. Our method does not significantly outperform the classical methods ABOD and kNN. However, these methods struggle to detect structures in subspaces, showing their inadequacy in dealing with the MV limitation, see Section \ref{sec::mv}.

Overall, the results support GSAAL's superiority in outlier detection tasks involving multiple views. Additionally, they establish our method as the leading GAAL option for One-class classification

\begin{table*}
\centering
\begin{tabular}{c|ccccccccc}
\hline
Method  & ABOD & \textbf{GSAAL} & GMM & IForest & KNN & LOF & MO GAAL & SVDD & SOD \\ \hline
ABOD    & =    &       & ++  & ++      &     &     & ++      & ++   & ++  \\
\textbf{GSAAL}   &      & =     & ++  & ++      &     &  +   & ++      & ++    & ++  \\
GMM     & -- --   & -- --    & =   & ++      & -- --  & -- --  &         & ++   & ++  \\
IForest & -- --   & -- --    & -- --  & =       & -- --   &     & ++      &      & ++  \\
KNN     &      &       & ++  & ++       & =   &     & ++      &      & ++  \\
LOF     &      &   --    & ++  &         &     & =   & ++      &   +   & ++  \\
MO GAAL & -- --   & -- --   &     & -- --     & -- -- & -- -- & =       &      & ++  \\
SVDD    & -- --   & -- --    & -- -- &         &     &     &   --      & =    & ++  \\
SOD     & -- --   & -- --   & -- -- & -- --     & -- -- & -- -- & -- --     & -- --  & =   \\ \hline
\end{tabular}
\caption{Results of the Connover-Iman test for pairwise comparisons of the rankings.}
\label{tab::connover}
\end{table*}
\begin{figure}
    \centering
    \includegraphics[width =  \linewidth]{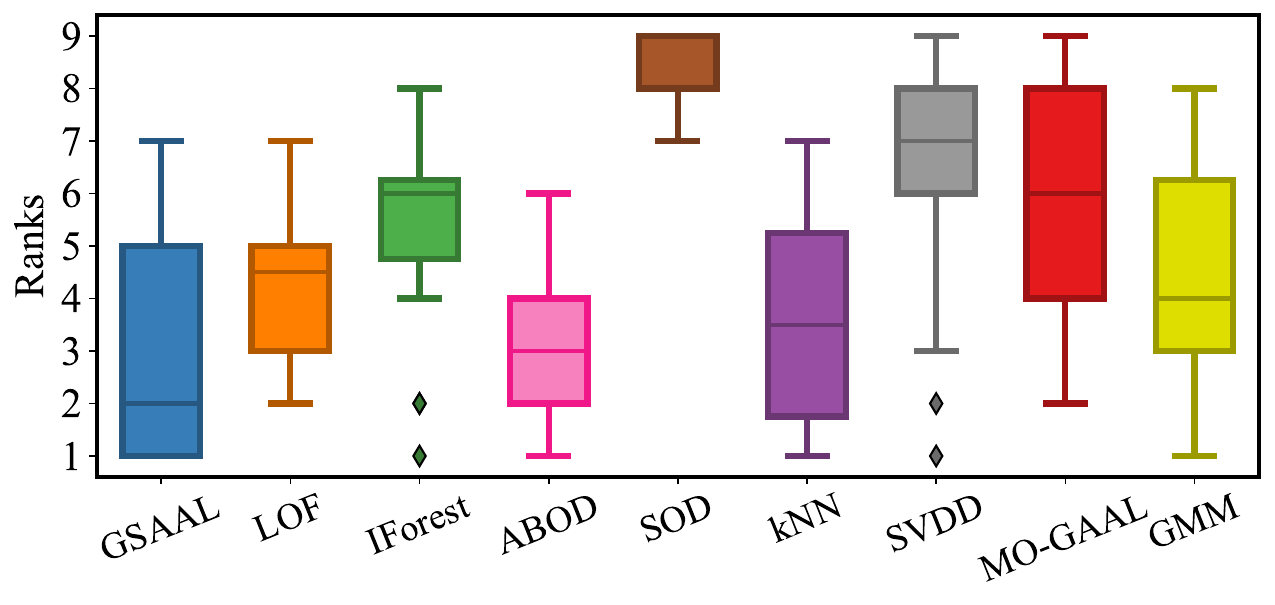}
    \caption{Boxplots of each method's rank in the real-world datasets.}
    \label{fig::boxplots}
\end{figure}

\subsubsection{Parameter Sensibility}
\begin{figure}
    \centering
    \begin{subfigure}{0.49\linewidth}
    \centering
        \includegraphics[width = 1\linewidth]{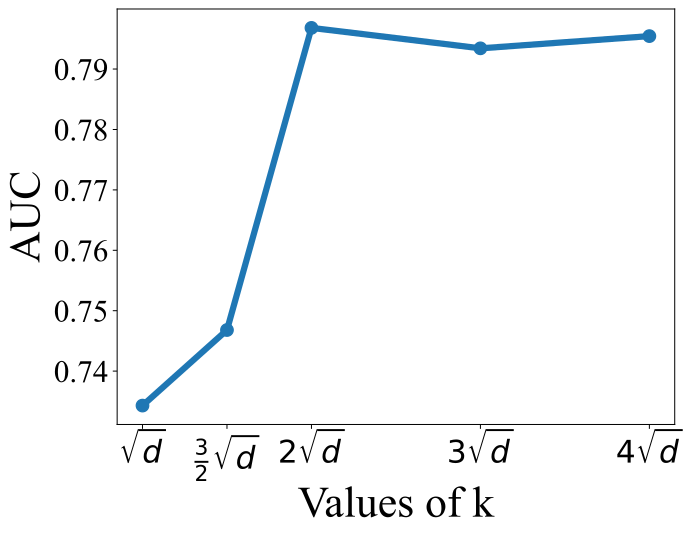}
        \caption{}
        \label{sfig::krat}
    \end{subfigure}
    \centering
    \begin{subfigure}{0.49\linewidth}
        \centering
        \includegraphics[width = 1\linewidth]{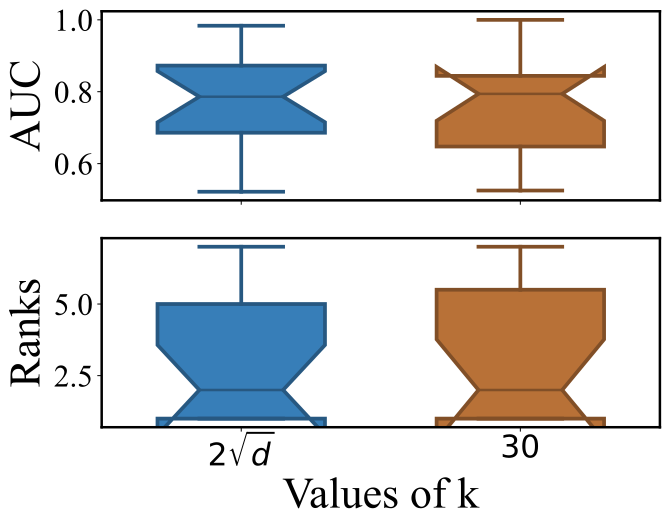}
        \caption{}
        \label{sfig::k30}
    \end{subfigure}
    \caption{Performance of the detector with different values of $k$.}
    \label{fig:enter-label}
\end{figure} %VA: (1) make the widths of boxes on boxplots the same as on figure fig::boxplots, (2) shrink a bit the first plot and make all three plots in one row with equal heights
We now explore the effect of the number of detectors in GSAAL, $k$, by repeating the previous experiments with varying $k$. Figure \ref{sfig::krat} plots the median AUC for different $k$ values, showing a stabilization at larger $k$. Next, Figure \ref{sfig::k30} compares the results with a fixed $k=30$ and the default value $k = 2\sqrt{d}$ used in the previous experiments; there is no large difference in either the AUC or the ranks. We also found that the results in Table~\ref{tab::connover} remain almost the same if one sets $k = 30$. So we recommend fixing $k = 30$, which makes GSAAL very suitable for high-dimensional data, as we will show next.

%%%%%%%%%%%%%%%%%%%%%%%%%%%%%%%%%%%%%%%%%%%%%%%%%%%%%%%%%%%%%%%%%%%%%%%%%%%%%%%%%%%%

\subsubsection{Scalability}
\begin{figure}
    \centering
    \begin{subfigure}{0.49\linewidth}
    \centering
        \includegraphics[width = 1\linewidth]{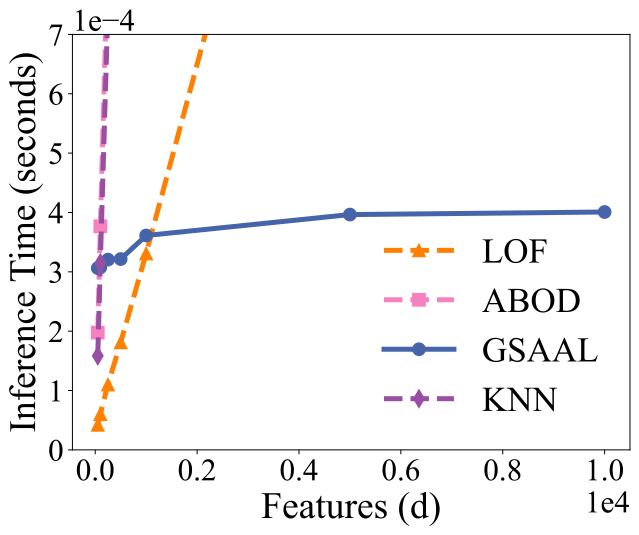} 
        \caption{}
        \label{sfig::timevsfeatures}
    \end{subfigure}
    \begin{subfigure}{0.49\linewidth}
    \centering
        \includegraphics[width = 1\linewidth]{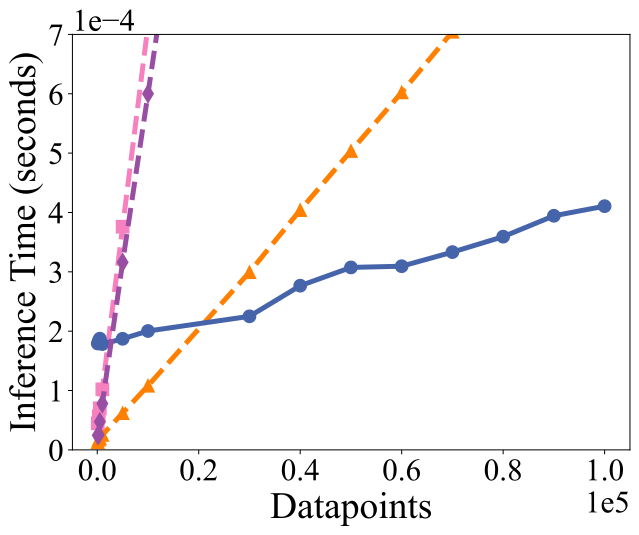}
        \caption{}
        \label{sfig::timevssamples}
    \end{subfigure}
   \caption{Plots of different performance metrics for scalability}
   \label{fig::scalability}
\end{figure} 
In section~\ref{sec::complexity}, we derived that the inference time of GSAAL scales linearly with the number of training points if the number of detectors $k$ is fixed, while it does not depend on the number of features $d$. 
%only with the number of training samples as it does not depend on the number of features for a fixed number of detectors $k$. 
This is in contrast to other methods, in particular LOF, KNN, and ABOD, which have quadratic runtimes in $d$~\cite{Breunig-2000,Kriegel-2008}. %FM: run times or inference times?
We now validate this experimentally. The procedure is as follows: 
\begin{enumerate}
    \item Generate datasets $D_\text{train}$ and $D_\text{test}$ consisting of random points. $|D_\text{test}|=10^6$.
    \item Train an OD method using $D_\text{train}$ and record the inference time over $D_\text{test}$. 
\end{enumerate}
Following the result of the sensitivity study, we fixed $k=30$. 
Figure~\ref{sfig::timevsfeatures} plots the inference time of a single data point as a function of the number of features when $\lvert D_{train}\rvert=500$. Figure~\ref{sfig::timevssamples} plots the inference time as a function of the number of points in $D_\text{train}$, for a fixed number of 100 features. Both figures confirm our complexity derivations and show that GSAAL is particularly well-suited for large datasets.

\section{Limitations \& Conclusions}
\label{sec::lim&conc}
We now briefly discuss future research directions and acknowledge the limitations of our study. We then summarize the main findings.

\subsection{Limitations and Future Work}

%As established in section \ref{sec::experiments}, we assumed $\mathbf{u}$ to be distributed uniformly across all subspaces. 
%This, in turn, means that the searching strategy equates to a uniform random selection of subspaces. 
In section~\ref{sec::experiments} we randomly selected subspaces for training the detectors in GSAAL, i.e. we took a uniform distribution of $\mathbf{u}$.
This was already sufficient to demonstrate the highly competitive performance of our method.
%In practice, this assumption seemed to perform well for our experiments.
However, GSAAL can work with any subspace search strategy to obtain the distribution of~$\mathbf{u}$, for example, the methods exploiting multiple views~\cite{Keller-2013,Muller-2012}. We have not included them in this paper due to the lack of an official implementation. In the future, we plan to benchmark various subspace search methods in GSAAL to see if there is one that consistently improves OD performance.
%GSAAL can work with any subspace search strategy for obtaining the samples of $\mathbf{u}$. 
%However, we were limited to their availability for performing the experiments. Subspace searching strategies based on MV do not have any official implementation available \cite{Keller-2013,Muller-2012}. Due to this, the implementation and testing of these methods fall outside of the scope of this article. Given the plethora of subspace selection methods beyond our related work and the absence of a leading approach, benchmarking GSAAL against a diverse set is beyond this article's scope and is treated as future work.

Next, GSAAL is limited to tabular data, since the ``multiple views'' problem has only been observed for this data type. The mathematical formulation of MV in section~\ref{sec::gsaal} does not exclude structured data. The difficulty lies in identifying good search strategies for $\mathbf{u}$ for non-tabular data, which remains an open question~\cite{gupta-2017}. 
%for the community . 
However, depending on the type of structured data, extending GSAAL to work with it is not immediate. Therefore, building a method that exploits the theoretical derivations of GSAAL for structured data is future work.

\subsection{Conclusions}
%Unsupervised outlier detection methods rely on a scoring function to distinguish inliers from outliers since the true probability function that generated the dataset is usually not available in practice.
%However they face one or more of the problems~--- Inlier Assumption, Curse of Cumensionality, or Multiple Views. 
%In this article, we have proposed the first mathematical formulation of MV, allowing for a better understanding of how to solve this occurrence.  
%Using this notion, we developed GSAAL, which is the first OD approach that solves MV, CD, and IA. In short, GSAAL is a GAN with one generator and multiple detectors fitted in the subspaces, to find outliers not visible in the full space.
%In our experiments with 26 different datasets, we demonstrated the usability of GSAAL. In particular, we demonstrated the ability to deal with MV, by showing how sensible it is to subspace structures, and its superior performance in OD tasks with real-life datasets.
%In addition, we showed that GSAAL can scale up to deal with high-dimensional datasets, which is not the case for our most competent competitors. 
%These results confirm the ability of GSAAL to deal with data exhibiting MV and its usability in any practical scenario involving large-scale data. 

Unsupervised outlier detection (OD) methods rely on a scoring function to distinguish inliers from outliers, since the true probability function that generated the dataset is usually unavailable in practice.
However, they face one or more of the following problems~--- Inlier Assumption (IA), Curse of Dimensionality (CD), or Multiple Views (MV). 
In this article, we have proposed the first mathematical formulation of MV, which allows for a better understanding of how to solve this occurrence.  
Using this formulation, we developed GSAAL, which is the first OD approach that solves MV, CD, and IA. In short, GSAAL is a generative adversarial network with a generator and multiple detectors fitted in the subspaces to find outliers not visible in the full space.
In our experiments on 26 different datasets, we demonstrated the usefulness of GSAAL, in particular, its ability to deal with MV and its superior performance on OD tasks with real datasets.
In addition, we have shown that GSAAL can scale up to deal with high-dimensional data, which is not the case for our most competent competitors. 
These results confirm GSAAL's ability to deal with data exhibiting MV and its usability in any practical scenario involving large datasets. 
\section{Aknowledgments}
This work was supported by the Ministry of Science, Research and the Arts Baden Württemberg, project Algorithm Engineering for the Scalability Challenge (AESC).
\appendix
\section{Theoretical Appendix}
\subsection{Previous Remarks}
Before starting to prove our main results, it is important to add a remark about our notation in this article. Whenever we denote $\mathbf{u}\mathbf{x}$, we mean the operation resulting in the following vector: 
$\mathbf{u}(\omega)\mathbf{x}(\omega)$. Thus, $\mathbf{u}\mathbf{x}$ is a random vector following its own distribution $p_\mathbf{ux}$. However, it is important to remark that $ux$, and therefore, also $u_i\mathbf{x}$, does not state the usual matrix-vector multiplication. 
What we mean by $ux$ is the operation $U\times_M x$, where $U$ stands for the range-complete version of $u$ and $\times_M$ the usual matrix multiplication. This means that whenever we write $ux$ we are considering 
\emph{the projection of $x$ into the subspace of the features selected in $u$}. 
This means that $u_i\mathbf{x}$ is the random vector composed of the features selected by $u_i$, and therefore, $p_{u_i\mathbf{x}}(u_ix)$ denotes subsequent marginal \textrm{pdf} of $\mathbf{x}$. We do not state this in the main text as it functionally does not change anything of our derivations, and simply works as a notation. The only important remarks stemming from this fact are the following:\begin{enumerate}
    \item $p_\mathbf{x}(u_ix) = p_\mathbf{x}(\pi_{u_i}(x))$, where $\pi_{u_i}$ denotes the projection of a point $x$ into the subspace of $u_i$. Therefore, we can write $p_\mathbf{x}(u_ix) = p_{u_i\mathbf{x}}(u_ix)$.
    \item The operator as stated before is not distributive. This is trivial, as given $\mathbf{u}$ a random matrix as in definition 1, ($1_d - \mathbf{u}$)$\mathbf{x}$ is defined properly, as $1_d - \mathbf{u} \in Diag(\{0,1\})$. However, $\mathbf{x} - \mathbf{u}\mathbf{x}$ denotes the vector subtraction between two vectors with different dimensionality. 
    \end{enumerate}

While not important to understand the following proofs and the derivations from the main text, understanding this is crucial for anyone seeking to work with these definitions.
\subsection{Proofs} 
We will reformulate all of the statements for completition before introducing each proof.
\begin{proposition}\label{prop::astatistic}
       Let $\mathbf{x}$ and $\mathbf{u}$ be as before with $p_\mathbf{x}$ myopic to the views of $\mathbf{u}$. Consider a set of independent realizations of $\mathbf{u}$: $\{u_i\}_{i=1}^{k}$, a realization of $\mathbf{x}$, $x$, and a realization of $\mathbf{ux}$, $ux$. Then $\frac{1}{k} \sum_{i} p_{u_i\mathbf{x}}(u_ix)$ is a sufficient statistic for $p_{\mathbf{ux}}(ux)$.
\end{proposition}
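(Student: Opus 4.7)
The plan is to recognize the quantity $\frac{1}{k}\sum_{i} p_{u_i\mathbf{x}}(u_i x)$ as a Monte Carlo average over the i.i.d.\ sample $\{u_i\}$, and then combine the Strong Law of Large Numbers with the myopicity assumption to identify its limit with $p_{\mathbf{ux}}(ux)$. ``Sufficient statistic'' in this context is read as ``consistent (and in fact unbiased) estimator'', so the goal reduces to verifying two identities: $\mathbb{E}\big[\frac{1}{k}\sum_i p_{u_i\mathbf{x}}(u_ix)\big] = p_{\mathbf{ux}}(ux)$ and almost-sure convergence of the sample average to the same quantity.

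First, fix a realization $x$ and set $g(u) := p_{u\mathbf{x}}(ux)$, so that the statistic becomes the empirical mean $\frac{1}{k}\sum_i g(u_i)$ of i.i.d.\ random variables (functions of the i.i.d.\ $u_i$). The SLLN then yields $\frac{1}{k}\sum_i g(u_i) \xrightarrow{a.s.} \mathbb{E}_{\mathbf{u}}\big[p_{\mathbf{u}\mathbf{x}}(\mathbf{u}x)\big]$. It remains to show that this expectation coincides with $p_{\mathbf{ux}}(ux)$. The route is to combine the law of total probability in the form $p_{\mathbf{ux}}(y) = \mathbb{E}_{\mathbf{u}}[p_{\mathbf{u}\mathbf{x}}(y)]$ with myopicity (Definition~\ref{def::mv}) and the preliminary remarks of the appendix, which together give $p_{u\mathbf{x}}(ux) = p_{\mathbf{x}}(x) = p_{\mathbf{ux}}(ux)$ for almost every realization $u$ and almost every $x$. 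This makes $g(u)$ essentially constant in $u$, forcing $\mathbb{E}_{\mathbf{u}}[g(\mathbf{u})] = p_{\mathbf{ux}}(ux)$ and in turn both unbiasedness of the statistic and almost-sure convergence to the target.

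The main obstacle will be the last identification. Myopicity as written in Definition~\ref{def::mv} equates $p_{\mathbf{x}}(x)$ with $p_{\mathbf{ux}}(ux)$, where $\mathbf{u}$ is a random matrix; to conclude $p_{u\mathbf{x}}(ux) = p_{\mathbf{ux}}(ux)$ for individual realizations $u$ one must read the myopicity condition carefully together with the projection-operator notation clarified in the preliminary remarks, especially the identity $p_{\mathbf{x}}(ux) = p_{u\mathbf{x}}(ux)$ stated there. Once that bridge is in place, everything else is routine: linearity of expectation yields unbiasedness, and the SLLN yields consistency, which together constitute what is meant by ``sufficient statistic'' in this proposition.
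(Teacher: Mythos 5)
Your overall framing --- read ``sufficient statistic'' as ``the sample mean of i.i.d.\ summands estimates their common expectation'' and then identify that expectation with $p_{\mathbf{ux}}(ux)$ --- matches the paper's, which likewise concludes by recalling that the sample mean is a sufficient estimator of the expected value. The gap is in your identification step. You claim that myopicity plus the projection convention $p_{\mathbf{x}}(ux)=p_{u\mathbf{x}}(ux)$ yields $p_{u\mathbf{x}}(ux)=p_{\mathbf{x}}(x)$ for almost every realization $u$, so that $g(u)=p_{u\mathbf{x}}(ux)$ is essentially constant. Neither ingredient gives this. Definition~\ref{def::mv} equates $p_{\mathbf{x}}(x)$ with $p_{\mathbf{ux}}(ux)$, the density of the \emph{mixture} random vector $\mathbf{ux}$, not with the individual marginals $p_{u\mathbf{x}}$; and the appendix convention only says that the symbol $p_{\mathbf{x}}(ux)$ \emph{denotes} the marginal $p_{u\mathbf{x}}(ux)$ --- it does not equate a marginal density at a projected point with the joint density at the full point (already false for a product of standard normals, where $p_{\mathbf{x}_1}(x_1)=\phi(x_1)\neq\phi(x_1)\phi(x_2)=p_{\mathbf{x}}(x)$). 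If your constancy claim were true, a single term $p_{u_1\mathbf{x}}(u_1x)$ would already equal the target and averaging over $k$ views would be superfluous, which is plainly not the intent of the proposition.

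The paper closes this gap with a different decomposition: it introduces the complementary projection $\mathbf{w}=1_d-\mathbf{u}$, writes $p_{\mathbf{ux}}(ux)=\mathbb{E}_{\mathbf{wx}}\bigl[p_{\mathbf{ux}\mid\mathbf{wx}}(ux\mid wx)\bigr]$, approximates this expectation by the sample mean over the draws $w_i$ (equivalently $u_i$), and then proves the pointwise identity $p_{\mathbf{ux}\mid w_i\mathbf{x}}(ux\mid w_ix)=p_{u_i\mathbf{x}}(u_ix)$ from the definition of conditional density together with myopicity and the independence of the $u_i$. That is, the summands are recognized as conditional densities given the complement, whose expectation over the complement equals the target; they are not individually equal to it. To repair your argument you would need to replace the constancy claim by a computation showing $\mathbb{E}_{\mathbf{u}}[g(\mathbf{u})]=p_{\mathbf{ux}}(ux)$; your fixed-argument total-probability identity $p_{\mathbf{ux}}(y)=\mathbb{E}_{\mathbf{u}}[p_{\mathbf{u}\mathbf{x}}(y)]$ does not deliver this, because the SLLN limit $\mathbb{E}_{\mathbf{u}}[p_{\mathbf{u}\mathbf{x}}(\mathbf{u}x)]$ has the evaluation point varying with $\mathbf{u}$.
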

\begin{proof}
        Consider $\mathbf{x}$ and $\mathbf{u}$ as in the statement. Further consider the variable $\mathbf{w}=1_d - \mathbf{u}$, being $1_d$ the $d\times d$ identity matrix. Then, as $\mathbf{u}$ has its image in $Diag(\{0,1\})_{d\times d}$, it is clear that $        \mathbf{x} = \mathbf{u}\mathbf{x}+\mathbf{w}\mathbf{x}.$
        Therefore, we can define $p_\mathbf{x}$ as the joint \textrm{pdf} of $\mathbf{ux}$ and $\mathbf{wx}$. Now, recalling the definition of marginal distribution:\[
        p_\mathbf{ux}(ux) = \mathbb{E}_{\mathbf{wx}}(p_{\mathbf{ux|wx}}(ux|wx)).
        \] 
        Since $\mathbf{w}$ is defined in a discrete space: \[
        \mathbb{E}_{\mathbf{wx}}(p_{\mathbf{ux|wx}}(ux|wx)) = \sum_i p_{\mathbf{wx}}(w_ix)p_{\mathbf{ux|}w_i\mathbf{x}}(ux|w_ix)
        \]                                            
        We can approximate this by the sample mean, with a sample of size $k$:
        \[
        \mathbb{E}_{\mathbf{wx}}(p_{\mathbf{ux|wx}}(ux|wx)) =\frac{1}{k}\sum_{i=1}^k p_{\mathbf{ux|}w_i\mathbf{x}}(ux|w_ix) + o_{\mathbb{P}}(1).
        \]
        Now, as $\mathbf{w}$ is perfectly represented by $\mathbf{u}$, and vice versa, sampling $w_i$ is equivalent to sampling $u_i$. We will prove that, $p_{\mathbf{ux|}w_i\mathbf{x}}(ux|w_ix) = p_{u_i\mathbf{x}}(u_ix)$. After that, the rest of the proof comes clearly by substituting and recalling that the sample mean is a sufficient estimator of the expected value. First, recall the definition of conditional probability: \[
        p(ux|w_ix)=\frac{p_\mathbf{x}(ux\cap w_ix)}{p_{w_i\mathbf{x}}(w_ix)}.
        \]
        Now, by considering that $u_i = 1_d-w_i$, we have that $p(ux\cap w_ix) = 0,~\forall u\neq u_i$. Therefore, by the law of total probabilities and the definition of conditional probability: \[
         p(ux|w_ix)=\frac{p_{w_i\mathbf{x}}(w_ix)p(u_ix | w_ix)}{p_{w_i\mathbf{x}}(w_ix)}.
        \] 
        Thus:
        \[p_{\mathbf{ux}|w_i\mathbf{x}}(ux|w_ix) = p_{u_i\mathbf{x}|w_i\mathbf{x}}(u_ix|w_ix)\]
        Then, since:\[
        p_{u_i\mathbf{x}|w_i\mathbf{x}}(u_ix|w_ix) = \frac{p_\mathbf{x}(u_ix\cap w_ix)}{p_{w_i\mathbf{x}}(w_ix)},
        \]
        by myopicity, independence of $\{u_i\}_i$, and considering that sampling from $\mathbf{u}$ is the same as sampling from $\mathbf{w}$, it trivially follows that:
        \[
        p_{u_i\mathbf{x}|w_i\mathbf{x}}(u_ix|w_ix) = p_{u_i\mathbf{x}}(u_ix).  
        \]
        We can retrieve the equality in the statement by consdering $\mathbf{u}$ to be uniformly distributed~---as we do in section \ref{sec::experiments}. 
\end{proof}

    \begin{theorem}\label{th::agsaal}
    Consider $\mathbf{x}$ and $\mathbf{u}$ as in the previous definition, with $x$ a realization of $\mathbf{x}$ and $\{u_i\}_i$ a set of realizations of $\mathbf{u}$.
    Consider a generator $\mathcal{G}:z\in Z\longmapsto \mathcal{G}(z) \in \mathbb{R}^d$ and $\{\mathcal{D}_i\}$, 
    $i = 1,\dots,k$, a set of detectors such as 
    $\mathcal{D}_i: u_ix \in S_i\subset \mathbb{R}^d \longmapsto \mathcal{D}_i(u_ix)\in [0,1]$.  
    $Z$ is an arbitrary noise space where $\mathcal{G}$ randomly samples from.
    Consider the following objective function \begin{equation}\label{eq::aobjgsaal}
    \begin{split}
        &\underset{\mathcal{G}}{\min} \underset{\mathcal{D}_i,~\forall i}{\max} \sum_i V(\mathcal{G},\mathcal{D}_i) =\\ %KB: What is V? (I.e., what is it intuitively?
        &\underset{\mathcal{G}}{\min} \underset{\mathcal{D}_i,~\forall i}{\max} \sum_i \mathbb{E}_{u_i\mathbf{x}}\log \mathcal{D}_i(u_ix) + \mathbb{E}_z \log\left(1 - \mathcal{D}_i\left(u_i \mathcal{G}(z)\right)\right) 
    \end{split}
    \end{equation}
    Under these conditions, the following holds: 
\begin{itemize}
        \item [$i)$] Each detector's loss in optimum is $V(\mathcal{G},\mathcal{D}^*_i)=\frac{1}{2}$.\label{th::agsaal::i}
        \item[$ii)$] Each individual $\mathcal{D}_i$ converges to $\mathcal{D}^*_i(u_ix)=p_{u_ix}(u_ix)$ after trained in Step \ref{enum::gaal_step2} of a GAAL method.\label{th::agsaal::ii} 
        \item[$iii)$]  $\mathcal{D}^*(x) = \frac{1}{k}\sum_{i= 1}^k \mathcal{D}^*_i(u_i \mathbf{x})$ approximates $p_\mathbf{ux}(ux)$. If $p_\mathbf{x}$ is myopic, $\mathcal{D}^*(x)$ also approximates $p_\mathbf{x}(x)$.\label{th::agsaal::iii} 
    \end{itemize}
\end{theorem}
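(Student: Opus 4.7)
The plan is to mimic the standard GAN derivation of Goodfellow et al. (2014), but push it through the subspace decomposition so that each summand of the objective is treated as an independent two-player game, and then close the loop with Proposition \ref{prop::statistic}. The key observation enabling everything is that although the generator $\mathcal{G}$ is shared across the $k$ discriminators, each detector $\mathcal{D}_i$ appears only in the single term $V(\mathcal{G},\mathcal{D}_i)$. Thus the inner $\max$ over $\{\mathcal{D}_i\}$ decouples: for fixed $\mathcal{G}$, I can maximise each $V(\mathcal{G},\mathcal{D}_i)$ independently.

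For statement $(i)$, I would pass the expectation in $V(\mathcal{G},\mathcal{D}_i)$ to a single integral over the subspace $S_i$, using that $u_i\mathbf{x}$ and $u_i\mathcal{G}(z)$ both take values in $S_i$. This yields an integrand of the form $p_{u_i\mathbf{x}}(y)\log \mathcal{D}_i(y) + p_{u_i\mathcal{G}(z)}(y)\log(1-\mathcal{D}_i(y))$, which is maximised pointwise at $\mathcal{D}_i^*(y) = p_{u_i\mathbf{x}}(y)/(p_{u_i\mathbf{x}}(y)+p_{u_i\mathcal{G}(z)}(y))$, exactly as in the classical GAN proof. Plugging this back and minimising over $\mathcal{G}$ rewrites each term via a Jensen–Shannon divergence between the two marginals in $S_i$, which is non-negative and vanishes iff $p_{u_i\mathcal{G}(z)} = p_{u_i\mathbf{x}}$; at that point $\mathcal{D}_i^* \equiv \tfrac{1}{2}$ and $V(\mathcal{G},\mathcal{D}_i) = -\log 4$. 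The only subtle bookkeeping is the fact (stressed in the ``Previous Remarks'' subsection) that $u_i\mathbf{x}$ must be interpreted as a projection; this is purely notational and does not alter the pointwise optimisation.

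For statement $(ii)$, I would invoke the active-learning argument of Hempstalk \emph{et al.} (2008), as already used in the single-detector GAAL setting. After Step 1, $\mathcal{G}$ has converged so that $u_i\mathcal{G}(z) \stackrel{d}{=} u_i\mathbf{x}$ in every subspace $S_i$. In Step 2, with $\mathcal{G}$ fixed, continuing to train $\mathcal{D}_i$ on real-vs-generated samples in the subspace drives it, up to the usual universal-approximation and sample-size caveats, to the class-probability function, which in this degenerate ``true distribution equals reference distribution'' setting is proportional to the inlier marginal $p_{u_i\mathbf{x}}$. This part is essentially inherited from existing GAAL analysis, but I must be careful to restrict it to subspace $S_i$ so that the convergence is to $p_{u_i\mathbf{x}}$ and not to $p_\mathbf{x}$.

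Statement $(iii)$ is then almost immediate: combine $(ii)$ with Proposition \ref{prop::statistic} (i.e., Proposition \ref{prop::astatistic} in the appendix). Since each $\mathcal{D}_i^*(u_ix)$ approximates $p_{u_i\mathbf{x}}(u_ix)$, the empirical average $\tfrac{1}{k}\sum_i \mathcal{D}_i^*(u_i\mathbf{x})$ inherits, as $k$ grows, the status of a sufficient statistic for $p_\mathbf{ux}(ux)$; under the myopicity hypothesis we additionally have $p_\mathbf{ux}(ux) = p_\mathbf{x}(x)$ almost everywhere, by Equation \eqref{eq::mv}, so $\mathcal{D}^*$ also approximates $p_\mathbf{x}$. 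I expect the main obstacle to be making the ``approximation'' claim in $(ii)$ precise enough to justify transferring it through the averaging in $(iii)$ — essentially, controlling the two stacked approximations (discriminator-to-marginal and sample-mean-to-expectation) simultaneously — while the decoupling argument in $(i)$ and the invocation of Proposition \ref{prop::statistic} in $(iii)$ should both be routine.
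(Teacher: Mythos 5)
Your proposal follows essentially the same route as the paper's own proof: decoupling the inner maximisation per detector, running the Goodfellow-style pointwise optimisation and Jensen--Shannon argument to get $\mathcal{D}_i^*\equiv\tfrac12$ at equilibrium for statement $(i)$, invoking the Hempstalk et al.\ class-probability argument restricted to each subspace for statement $(ii)$, and combining with Proposition~\ref{prop::astatistic} and myopicity for statement $(iii)$. The only cosmetic difference is the reported optimal loss value ($-\log 4$ in your write-up versus the numerically stable $\log 2$ noted in the appendix proof), which reflects an inconsistency in the paper's own statements rather than a gap in your argument.
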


\begin{proof}
    This proof will follow mainly the results in \cite{Goodfellow-2014}, adapted for our case. We will first derivative two general results that we are going to use to immediately prove $(i),(ii)$ and $(iii)$. First, consider the objective function \begin{equation*}
    \begin{split}
             \sum_i V(\mathcal{G},\mathcal{D}_i) = \sum_i &\mathbb{E}_{u_i\mathbf{x}\sim p_{u_i\mathbf{x}}}\log (\mathcal{D}_i(u_ix)) + \\ &\mathbb{E}_{\mathbf{z}\sim p_\mathbf{z}} (1-\log(\mathcal{D}_i(u_i\mathcal{G}(z)))),
    \end{split}
    \end{equation*}
    where $\mathbf{z}$ is the random vector used by $\mathcal{G}$ to sample from the noise space $Z$.
    We will write $\mathbb{E}_\mathbf{x}, \mathbb{E}_\mathbf{z}$ and $\mathbb{E}_{u_i\mathbf{x}}$ instead of $\mathbb{E}_{\mathbf{x}\sim p_\mathbf{x}}, \mathbb{E}_{\mathbf{z}\sim p_\mathbf{z}}$ and $\mathbb{E}_{u_i\mathbf{x}\sim p_{u_i\mathbf{x}}}$ as an abuse of notation. 
    
    The problem is, then, to optimize: \begin{equation}\label{proof::objective}
        \underset{\mathcal{G}}{\min} \underset{\mathcal{D}_i,~\forall i}{\max} \sum_i V(\mathcal{G},\mathcal{D}_i).
    \end{equation}
    Fixing $\mathcal{G}$ and maximizing for all $\mathcal{D}_i$, each detector individually maximizes $V(\mathcal{G},\mathcal{D}_i)$. Let us try to obtain the optimal of each $\mathcal{D}_i$ with a fixed $\mathcal{G}$. First, we write:
    \[
    \begin{split}
    V(\mathcal{G},\mathcal{D}_i) = &\int_{u_ix}p_{u_i\mathbf{x}}(u_ix)\log \mathcal{D}_i(u_ix) du_ix + \\
    &\int_{z} p_\mathbf{z}(z) \log(1-\mathcal{D}_i(u_i\mathcal{G}(z))) dz.
    \end{split}
    \]
    As $\mathcal{G}$ uses $\mathbf{z}$ to sample from its sample distribution $p_\mathcal{G}(x)$, we can rewrite the second addent, like in \cite{Goodfellow-2014}, as:\[
    \begin{split}
     V(\mathcal{G},\mathcal{D}_i) = &\int_{u_ix}p_{u_i\mathbf{x}}(u_ix)\log \mathcal{D}_i(u_ix) du_ix + \\
    &\int_{u_ix} p_\mathcal{G}(u_ix) \log(1-\mathcal{D}_i(u_ix)) du_ix.
    \end{split}
    \]
    Aggregating both integrals, we have a function of the type $f(y) = a\log(y) + b\log(1-y)$, with $a,b\in \mathbb{R}-\{0\}$. It is a known fact in calculus that $f(y)$ obtains its optimum in $y = \frac{a}{a+b}$. As $f(y)\in \mathbb{R}^+$, $V(\mathcal{G},\mathcal{D}_i)$ obtains its optimum for a given $\mathcal{G}$ in: \begin{equation}\label{proof::D^*}
    D^*_i(u_ix) = \frac{p_{u_i\mathbf{x}}(u_ix)}{p_{u_i\mathbf{x}}(u_ix) + p_\mathcal{G}(u_ix)}.
    \end{equation}
    
    Let us now consider the following function\begin{equation}
        \begin{split}
            C(\mathcal{G}) &= \sum_i \underset{\mathcal{D}_i,~\forall i}{\max} V(\mathcal{G},\mathcal{D}_i)\\
            &=\sum_i \mathbb{E}_{u_i\mathbf{x}}\log\frac{p_{u_i\mathbf{x}}(u_ix)}{p_{u_i\mathbf{x}}(u_ix) + p_{\mathcal{G}}(u_ix)} +\\& \mathbb{E}_{u_i\mathbf{x}\sim p_\mathcal{G}}\log\frac{p_\mathcal{G}(u_ix)}{p_{u_i\mathbf{x}}(u_ix) + p_{\mathcal{G}}(u_ix)}.
        \end{split} 
    \end{equation}
    This is known in Game Theory as the cost function of player ``$\mathcal{G}$'' in the null-sum game defined by the $\min \max$ optimization problem. \cite{Goodfellow-2014} refers to it as the virtual training criterion of the GAN. The adversarial game defined by (\ref{proof::objective}) reaches an equilibrium (and thus, the $\min\max$ problem an optimum) whenever $C(\mathcal{G})$ is minimized. We will study the value of $\mathcal{G}$ in such equilibrium and use it, together with (\ref{proof::D^*}), to prove the statements. 

    Rewriting $C(\mathcal{G})$ it is clear that: \begin{equation*}
        \begin{split}
            C(\mathcal{G}) = &\sum_i KL\left(p_{u_i\mathbf{x}(u_ix)}\| \frac{p_{u_i\mathbf{x}}(u_ix) + p_\mathcal{G}(u_ix)}{2}\right)\\
        &+ KL\left(p_{\mathcal{G}}(u_ix)\| \frac{p_{u_i\mathbf{x}}(u_ix) + p_\mathcal{G}(u_ix)}{2}\right).
        \end{split}
    \end{equation*}
    This expression corresponds to that of a sum of multiple binary cross entropies between a population coming from $p_{u_i\mathbf{x}}$ and from $p_\mathcal{G}$ projected by $u_i$. Therefore, as we know, we can rewrite: \[
    C(G) = \sum_i 2JSD(p_{u_i\mathbf{x}(u_ix)}\| p_\mathcal{G}(u_ix)),
    \]
    with $JSD$ the Jensen-Shannon divergence. Since $JSD(s\|r)\in [0,\log(2))$, it is clear that $C(\mathcal{G})$ obtains its minimum only whenever \begin{equation}\label{proof::G^*}
        p_\mathcal{G}(u_ix) = p_{u_i\mathbf{x}}(u_ix), \forall\forall x\footnote{For almost all $x$};
    \end{equation}
    and for all $i\in \{1,\dots, k\}$.

    Knowing $\mathcal{G}$ and $\mathcal{D}_i$ in the optimum for all $i$, we can prove the statements above:
    \paragraph{(i)} As $p_\mathcal{G}(u_ix)= p_{u_i\mathbf{x}}(u_ix)$ for almost all $x$, in the optimum of (\ref{proof::objective}), it is immediate that: \[
        \mathcal{D}_i(u_ix) = \frac{1}{2},
    \]
    i.e., the detectors cannot differentiate between the real training data and the synthetic data of the generator. If one employs the numerically stable version of each $V(\mathcal{G}, \mathcal{D}_i)$ (equivalent as the numerically stable version of the binary cross entropy \cite{pkg::keras}), it is trivial to see that \[
    V^{\text{stable}}(\mathcal{G}, \mathcal{D}_i) = \log(2).
    \]

    \paragraph{(ii)} After optimizing (\ref{proof::objective}), training each $D_i$ individually with $\mathcal{G}$ fixed, is the equivalent of building a two-class classifier distinguishing between the artificial class generated by $p_\mathcal{G}(u_ix)=p_{u_i\mathbf{x}}(u_ix)$ and the real data coming from $p_{u_i\mathbf{x}}(u_ix)$. By \cite{Hempstalk-2008}, the resulting two-class classifier would be such as:\[
    D_i(u_ix) = p_{u_i\mathbf{x}}(u_ix).
    \]
    \paragraph{(iii)} By proposition \ref{prop::astatistic} and statement $(ii)$, $\frac{1}{k}\sum_i D^*_i(u_ix)$ is a sufficient estimator for $p_{u_i\mathbf{x}}(u_ix)$. By myopicity, it is also of $p_\mathbf{x}(x)$.
    \end{proof}

\begin{theorem}\label{th::acomplexity}
    Giving our GSAAL method with generator $\mathcal{G}$ and detectors $\{\mathcal{D}_i\}_{i=1}^k$, each with four fully connected hidden layers, $\sqrt{n}$ nodes in the detectors and $d$ in the generator, we obtain that:
    \begin{itemize}
        \item[$i)$] The training time complexity is bounded with $\mathcal{O}(E_D\cdot n \cdot (k \cdot n + d^2))$, for a dataset $D$ with $n$ training samples and $d$ features. $E_D$  is an unknown complexity variable depicting the unique epochs to convergence for the network in dataset $D$.
        \item[$ii)$] The single sample inference time complexity is bounded with $\mathcal{O}(k \cdot n)$, with $k$ the number of detectors used.  
    \end{itemize}
\end{theorem}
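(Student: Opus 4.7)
The plan is to account separately for the per-sample forward/backward pass cost of each network component, then multiply by the number of samples processed during training or inference. Throughout, I would rely on the standard fact that a forward (and also a backward) pass through a fully connected layer with $a$ input units and $b$ output units costs $\Theta(ab)$ operations, and that for a network with constant-depth $L$ and a hidden width $w$, the total per-sample cost is $\Theta(L w^2)$ up to the contributions of the input and output layers.

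\paragraph{Inference cost, part $(ii)$.}
First I would analyze a single detector $\mathcal{D}_i$. It has four fully connected hidden layers with $\sqrt{n}$ nodes each, so the three internal weight matrices are of size $\sqrt{n}\times\sqrt{n}$ and each contributes $\Theta(n)$ operations per sample; the input and output layers contribute lower-order terms (the input layer costs $\Theta(d_i\sqrt{n})$ where $d_i\le d$ is the subspace dimension, which is dominated by $n$ under the implicit assumption $d\le\sqrt{n}$ used by the architecture). Summing gives $\Theta(n)$ per detector per sample. Aggregating the $k$ detector scores and averaging costs only $\Theta(k)$, so the single-sample inference cost is $\mathcal{O}(k\cdot n)$, which is statement $(ii)$.

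\paragraph{Training cost, part $(i)$.}
For training I would add the generator cost. The generator has four fully connected hidden layers with $d$ nodes, so its forward and backward passes each cost $\Theta(d^2)$ per sample. Per epoch, the network sees $n$ training samples (and $n$ generator samples, of the same order), so the per-epoch cost of the generator is $\Theta(n\cdot d^2)$ and the per-epoch cost of the $k$ detectors is $k\cdot n\cdot\Theta(n)=\Theta(k\cdot n^2)$. Adding them gives $\Theta(n(k\cdot n+d^2))$ per epoch, and multiplying by the number of training epochs $E_D$ yields the claimed bound $\mathcal{O}(E_D\cdot n\cdot(k\cdot n+d^2))$ for statement $(i)$. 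Note that backward-pass cost is within a constant factor of the forward-pass cost for fully connected layers, so it is absorbed into the $\mathcal{O}$.

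\paragraph{Main obstacle.}
The routine matrix-size bookkeeping is straightforward; the one subtlety I would be careful about is what is swept into the leading $n$ term in each detector. Specifically the input layer of $\mathcal{D}_i$ processes a vector of subspace dimension $d_i\le d$, contributing $\Theta(d_i\sqrt{n})$; this is dominated by the internal-layer cost $\Theta(n)$ precisely under the regime the architecture is designed for (small $d_i$ relative to $\sqrt{n}$). I would state this assumption explicitly so that the per-detector $\Theta(n)$ bound, and hence both statements, are justified without hidden dependence on $d$ in the inference bound.
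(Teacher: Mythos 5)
Your proposal is correct and follows essentially the same route as the paper's proof: per-layer matrix--vector cost bookkeeping giving $\Theta(n)$ per detector per sample, multiplied by $k$ for inference, and the generator's $\Theta(d^2)$ per-sample cost times $n$ samples and $E_D$ epochs for training. Your explicit remark that the detector's input layer costs $\Theta(d_i\sqrt{n})$ and is only absorbed into $\Theta(n)$ when the subspace dimension is at most $\sqrt{n}$ is a point the paper silently skips (it starts from "the datapoint after passing the input layer"), so flagging that assumption is a small but genuine improvement rather than a divergence in approach.
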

\begin{proof}
    An evaluation of a neural network is composed of two steps, the backpropagation, and the fowardpass steps. While training the network requires both, inference requires only a fowardpass. Therefore, we will first prove $(ii)$ and will build upon it to prove $(i)$.
    \paragraph{(ii).} GSAAL consists of a generator and $k$ detectors. Single point inference consists of a single fowardpass of all the detectors. We will first prove the general complexity of a fowardpass of a general fully connected 4 layer network and will use it to derive all the other complexities. Let us consider three weight matrices $W_{ji}$, $W_{hj}$ and $W_{lh}$ each between two layers, with $j,i,h$ and $l$ being the number of nodes in each. Therefore, $W_{ji}$ denotes a matrix with $j$ rows and $i$ columns, and so on. Now, let us consider $x_{i1}$ the datapoint after passing the input layer. Lastly, without any loss of generality, consider $f$ to be the activation function for all layers. This way, the forward pass of a single detector can be written as: \[
    c_{l1}=f\left(W_{lh}f\left(W_{hj}f\left(W_{ji}x_{i1}\right)\right)\right).
    \]
    We will study the complexity in the first layer and use it to derive the complexity of the others. $A_{j1} = W_{ji}x_{i1}$ is a simple matrix-vector multiplication that we know to be $\mathcal{O}(j\cdot i)$ atmost. Then, as $f$ is an activation function, $f(A_{j1})$ is equivalent to writing $f_{j1} \odot A_{j1}$, with $\odot$ being the element-wise multiplication. Thus, $f\left(W_{ji}x_{i1}\right)$ is: \[
    \mathcal{O}(j\cdot i + j) = \mathcal{O}(j\cdot (i + 1)) = \mathcal{O}(j\cdot i).
    \]
    Doing this for all layers, we obtain: \begin{equation}\label{eq::acomplexityfp}
    \mathcal{O}(l\cdot h + k \cdot j + j\cdot i).
    \end{equation}
    As all layers have $\sqrt{n}$ nodes, 
    \[\mathcal{O}(3n) = \mathcal{O}(n).\]
    As we have $k$ detectors, the complexity for a fowardpass of all detectors, and thus, for a single sample inference of GSAAL is: \[
    \mathcal{O}(k\cdot n).
    \]
    \paragraph{(i).} A backpropagation step has the same complexity as an inference step on all training samples. As we have $n$ training samples, this then becomes \[
    \mathcal{O}(k\cdot n^2)
    \]
    for the detectors. As the training consists of multiple epochs, we will write
    \[
    \mathcal{O}(E_D\cdot k \cdot n^2),
    \]
    with $E_D$ being the number of epochs needed for convergence for the training data set $D$. 
    As the training consists of both backpropagation and fowardpass steps on all training samples, the total training time complexity for all detectors is: \[
    \mathcal{O}(E_D\cdot k \cdot n^2 + k\cdot n^2) = \mathcal{O}(E_D\cdot k \cdot n^2).
    \]
    As we also need to consider the generator, we will use equation \ref{eq::acomplexityfp} to derive both steps on the generator. As the generator is also a fully connected 4-layer network, with all layers having $d$ nodes, the complexity for a single fowardpass is: \[
    \mathcal{O}(d^2).
    \]
    As during training one generates $n$ samples during each fowardpass: \[
    \mathcal{O}(n \cdot d^2).
    \]
    Now, on each backpropagation pass the network calculates the backpropagation error for each generated sample, thus, \[
    \mathcal{O}(n \cdot d^2)
    \]
    is also the time complexity for the backpropagation step of the generator. Considering all $E_D$ epochs and both backpropagation and fowardpass steps of the generator and all the detectors, the time complexity of GSAAL's training is: \[
    \mathcal{O}(E_D\cdot k \cdot n^2 + E_D \cdot n \cdot d^2) =  \mathcal{O}(E_D \cdot n \cdot (k\cdot n + d^2))
    \]
\end{proof}
\subsection{Multiple Views (extension)}
In this section we extend the derivations in section 3.1 by providing an example of a myopic distribution: \begin{example}[Myopic distribution]
Consider a $\mathbf{x}$ like in example 1. Here, it is clear that $\mathbf{x_1,x_2}\bot \mathbf{x_3}$.
Consider, then, $\mathbf{u}$ such that: \[
\mathbf{u}:\{1\} \longrightarrow \{diag(1,1,0)\}.   
\] To test whether $p_\mathbf{x}$ is myopic, we employed a simple test utilizing a statistical distance ($MMD$ with the identity kernel) between $p_\mathbf{x}$ and $p_\mathbf{ux}$. This way, if $\hat{MMD}(p_\mathbf{x}\| p_\mathbf{ux}) = 0$, it would be clear that the equality holds. As a control measure, we also calculated the same distance for a different population $\mathbf{x}'$, where $\mathbf{x}_3 = \mathbf{x}_1^2$. We have plotted the results in image \ref{fig::ammd}, where Population 1 refers to $\mathbf{x}$ and Population 2 to $\mathbf{x}'$. As we can see, we do obtain a positive result in the test of myopicity for $\mathbf{x}$ and a negative one for $\mathbf{x}'$.
\end{example}
\begin{figure}
    \centering
    \includegraphics[width = \linewidth]{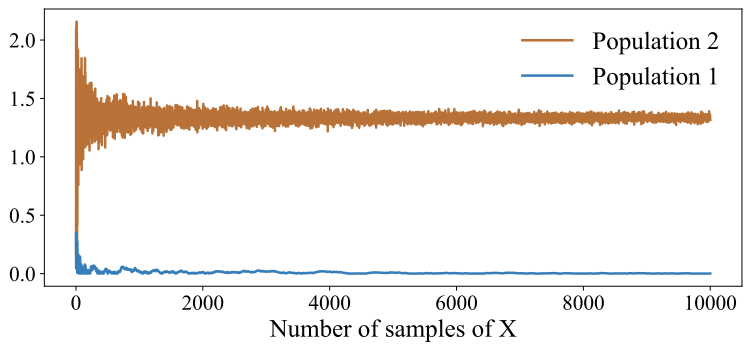}
    \caption{Difference in statistical distance between two populations.}
    \label{fig::ammd}
\end{figure}
\subsection{GSAAL (extension)}
We now extend the results from section 3.2 by providing the pseudocode for the training of our method. It is important to consider that, while theorem \ref{th::agsaal} formulates the optimization problem in terms of the neural networks $\mathcal{G}$ and $\{\mathcal{D}_i\}_i$, in practice this will not be the case. Instead, we will consider the optimization in terms of their weights, $\Theta_\mathcal{G}$ and $\Theta_{\mathcal{D}_i}$. Therefore, in practice, the convergence into an equilibrium will be limited by the capacity of the networks themselves \cite{Goodfellow-2016}. We considered the optimization to follow minibatch-stochastic gradient descent \cite{Goodfellow-2016}. To consider any other minibatch-gradient method it will suffice to perform the necessary transformations to the gradients. 
\begin{algorithm}
\caption{GSAAL training}\label{alg::gsaal_training}
\begin{algorithmic}[1]
\REQUIRE{Data set $D$, Number of Discriminators $\kappa$, $\mathbf{u}$, $epochs$, $stop\_epoch$}
\STATE Initialize Generator \(\mathcal{G}\)
\COMMENT{\#$d$ is the dimensionality of $D$}
\STATE \(\{u_i\}_{i=1}^\kappa \gets\) \textsc{DrawFrom$\mathbf{u}$}($\kappa$) 
\STATE Initialize Discriminators $\{\mathcal{D}_i\}_{i=1}^\kappa$ with unique subspaces $\{u_i\}_{i=1}^\kappa$
\FOR{\(epoch \in \{1,...,epochs\}\)}
    \FOR{\(batch \in \{1,...,batches\}\)}
        \STATE\(noise \gets\) Random noise \(z^{(1)},...,z^{(m)}\) from $Z$
        \STATE \(data \gets\) Draw current batch \(x^{(1)},...,x^{(m)}\)
        \FOR{\(j \in \{1...k\}\)}
           \STATE Update \(\mathcal{D}_j\) by ascending the stochastic gradient: $\nabla_{\Theta_{\mathcal{D}_j}} \frac{1}{m}\sum_{i=1}^m \log(\mathcal{D}_j(u_jx^{(i)}))+\log(1-\mathcal{D}_j(u_j\mathcal{G}(z^{(i)})))$
        \ENDFOR 
        \IF{$epoch < stop\_epoch$}
            \STATE Update \(\mathcal{G}\) by descending the stochastic gradient: $\nabla_{\Theta_G}\frac{1}{k}\sum_{j=1}^k \frac{1}{m}\sum_{i=1}^m \log(1- \mathcal{D}_j(\mathcal{G}(z^{(i)})))$
        \ENDIF
    \ENDFOR
\ENDFOR
\end{algorithmic}
\end{algorithm}

The pseudocode is located in Algorithm \ref{alg::gsaal_training}. As it is the training for the method, it takes both the parameters for the method and the training. In this case, $epochs$ refers to the total number of epochs we will train in total, while $stop\_epoch$ marks the epoch where we start step 2 of the GAAL training. 
Lines 1-3 initialize both the detectors in their subspaces and the generator with random weight matrices $\Theta_{\mathcal{D}_i}$ and $\Theta_{\mathcal{G}}$. Lines 4-13 correspond to the normal GAN training loop across multiple epochs, which we referred to as step 1 of a GAAL method in the main text, if $epoch < stop\_epoch$. Here we proceed with training each detector and the generator using their gradients. Lines 8-10 update each detector by ascending its stochastic gradient, while line 11 updates the generator by descending its stochastic gradient. After the normal GAN training, we start the active learning loop \cite{LiuYezheng-2020} once $epoch \geq stop\_epoch$. The only difference with the regular GAN training is that $\mathcal{G}$ remains fixed, i.e., we do not descend using its gradient. This allows us to additionally train the detectors and, in case of equilibrium of step 1, converge to the desired marginal distributions as derived in theorem \ref{th::agsaal}.

\section{Experimental Appendix}
In this section, we will include a supplementary experiment testing the IA condition for competition, and an extension of both experimental studies featured in the main text. All of the code for the extra experiment, as well as for all experiments in the main text, can be found in our remote repository\footnote{https://github.com/WamboDNS/GSAAL}.

\subsection{Effects of Inlier Assumptions on Outlier Detection}
\begin{table*}
\centering
\begin{tabular}{llll}
\toprule 
Outlier Type & Assumption Description                                                                                                   & Outlier Description                                                                                 & $M$                         \\ \midrule \rowcolor[HTML]{EFEFEF} 
Local        & \begin{tabular}[c]{@{}l@{}}Assumes that all inliers are\\  located close to other inliers\end{tabular}                   & \begin{tabular}[c]{@{}l@{}}As a result, outliers are \\ far away from inliers\end{tabular}          & LOF                         \\ 
Angle        & \begin{tabular}[c]{@{}l@{}}Assumes that all inliers \\ have other inliers in all angles from their position\end{tabular} & \begin{tabular}[c]{@{}l@{}}As a result, outliers are \\ not surrounded by other points\end{tabular} & ABOD                        \\ \rowcolor[HTML]{EFEFEF} 
Cluster      & \begin{tabular}[c]{@{}l@{}}Assumes that all inliers\\  form large clusters of data\end{tabular}                          & \begin{tabular}[c]{@{}l@{}}As a result, outliers are \\ gathered in small clusters\end{tabular}     & $F_{n,\mu + \varepsilon_i}$ \\  \bottomrule
\end{tabular}
\caption{Different outliers generated for the experiments.}
\label{tab::IA}
\end{table*}
GAAL methodologies are capable of dealing with the inlier assumption by learning the correct inlier distribution $p_\mathbf{x}$ without any assumption \cite{LiuYezheng-2020}. While this should also extend to our methodology, we will study experimentally whether this condition holds in practice. To do so, as one cannot identify beforehand whether a method is going to fail due to IA, we will generate synthetic datasets. This will allow us to generate outliers that we know to follow from a specific IA, ensuring that failure comes from the anomalies themselves. We will include all of the code in the code repository.
To generate the synthetic datasets we follow: \begin{enumerate}
    \item Generate $D$, a population of $2000$ inliers following some distribution $F$ in $\mathbb{R}^{20}$.
    \item Select an outlier detection method $M$ with some assumption about the normality of the data and fit it using $D$. We will call such $M$ as the reference model for the generation. \label{enum::M}
    \item Generate $400$ outliers by sampling on $\mathbb{R}^{20}$ uniformly and keeping only those points $o$ such that $M(o) = 1$ (i.e., they are detected as outliers). We will write $O^D$ to refer to such a collection of points. \label{enum::gen.out}
    \item Repeat step \ref{enum::gen.out} $10$ times, to obtain $O^D_1,\dots, O^D_{10}$.
    \item Sample out $20\%$ of the points in $D$. The remainder $80\%$ will be stored in $D^\text{train}$, and the other $20\%$ in $D^\text{test}_1,\dots, D^\text{test}_{10}$ together with each $O^D_{i}$.
\end{enumerate}
These steps were repeated $4$ times with different $F$, to create $4$ different training sets and $40$ different testing sets, corresponding to a total of $40$ different datasets employed per model $M$ selected in step \ref{enum::M}. As we used $3$ different reference models, we have a total of $120$ different datasets employed in this experiment alone. In particular, the models used for this are collected in table \ref{tab::IA}. The table contains the name of the outlier type, the description of the IA taken to generate them, and a brief description of how the outliers should look. Column $M$ contains the method employed to generate each, these being $LOF$, $ABOD$, and the same inlier distribution as $D$, but with multiple shifted means $\mu_i$ and with a significantly lower amount of points $n$. A visualization of how these outliers would look with $2$ features is located in figure \ref{fig::aIA_2dex}.
\begin{figure}
\centering
    \begin{subfigure}{0.3\linewidth}
    \centering
    \includegraphics[width = \linewidth]{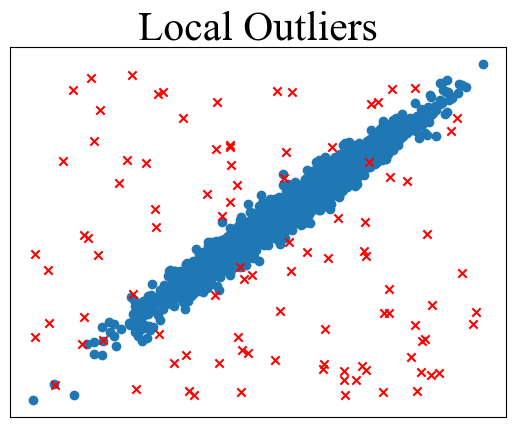}
    \caption{}
    \label{sfig::local_2d}
    \end{subfigure}
    \begin{subfigure}{0.3\linewidth}
    \centering
    \includegraphics[width = \linewidth]{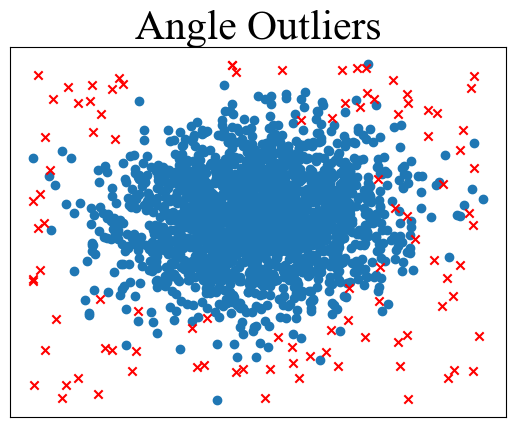}
    \caption{}
    \label{sfig::angle_2d}
    \end{subfigure}
    \begin{subfigure}{0.3\linewidth}
    \centering
    \includegraphics[width = \linewidth]{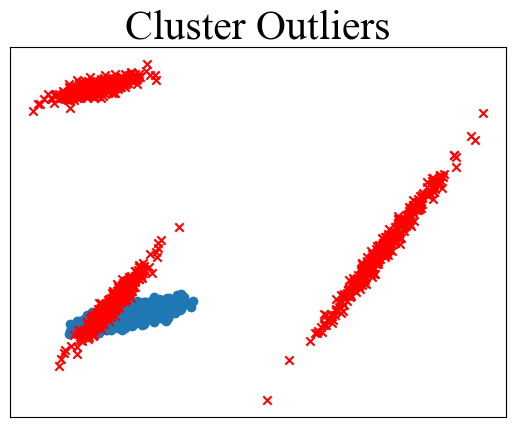}
    \caption{}
    \label{sfig::cluster_2d}
    \end{subfigure}
    \caption{2D-example of the different types of anomalies we generate using the method summarized in table \ref{tab::IA}.}
    \label{fig::aIA_2dex}
\end{figure}

To study how different methods behave when detecting these outliers, we have performed the same experiments as in section 4.3, but with these synthetic datasets. Figure \ref{fig::aIA_boxplots} gathers all the AUCs of a method in $3$ boxplots, one for each outlier type in each training set. Additionally, we grouped all based on the IA and assigned a similar color for all of them. 
We have done this for the classical OD methods LOF, ABOD, and kNN, besides our method GSAAL. We cropped the image below $0.45$ in the $y$ axis as we are not interested in results below a random classifier. As we can see, classical methods seem to correctly detect outliers for an outlier type that verifies its IA. However, whenever we introduce outliers behaving outside of their IA, the performance hit is significant. Notoriously, it appears that none of them had trouble detecting the \emph{Local} and \emph{Angle} outlier type. regardless of their IA. This can be easily explained by those outliers types being similar, as we can see in figure \ref{fig::aIA_2dex}. 
On the other hand, GSAAL manages to have a significant detection rate regardless of the outlier type. 

\begin{figure*}
    \centering
    \includegraphics[width = \linewidth]{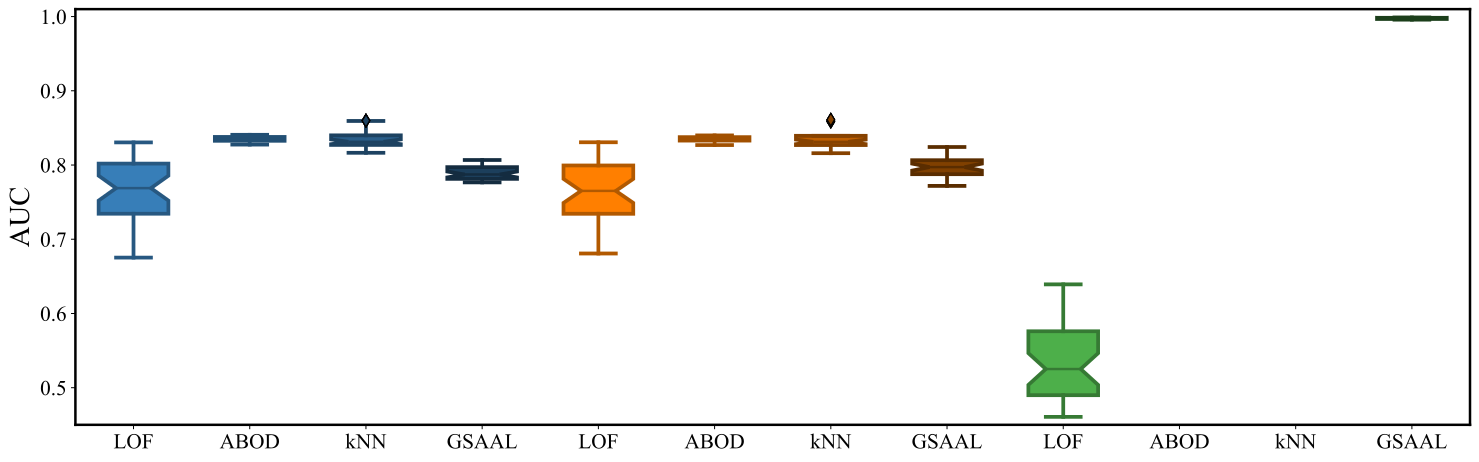}
    \caption{AUCs of the different methods in the IA experiments. From left to right: Local (blue), Angle (orange) and Cluster (green).}
    \label{fig::aIA_boxplots}
\end{figure*}

\subsection{Effects of Multiple Views on Outlier Detection (extension)}

In this section, we will include a brief description of the generation process for the datasets used in section 4.2. We will also perform the same experiment as in section 4.2 for all methods showcased in the main text and additional datasets.
The datasets were generated by the following formulas:
\begin{itemize}
\item\emph{Banana.} Given $\theta \in [0,\pi]$ we have $\mathbf{x} = \sin(\theta) + U(0,0.1)$ and $\mathbf{y} = \sin(\theta)^3 + U(0,0.1)$.
\item\emph{Spiral.} Given $\theta \in [0,4\pi]$ and $r\in(0,1)$, we have $\mathbf{x}=r\cos(\theta) + U(0,0.1)$ and $\mathbf{y}= r\sin(\theta)$.
\item\emph{Star.} Given $\theta \in [0,2\pi]$ and $r \in \left\{r\in \mathbb{R}| r = \sin(5\theta); r \geq 0,1,0.4\right\},$ we have $\mathbf{x} = r\cos(\theta) + U(0,0.1)$ and $\mathbf{y} = r\sin(\theta) + U(0,0.1)$.
\item\emph{Circle.} Given $\theta \in [0,2\pi]$, we have $\mathbf{x} = \cos(\theta) + U(0,0.1)$ and $\mathbf{y} = \sin(\theta) + U(0,0.1).$
\item\emph{L.} Given $x_1 = N(0,0.1), x_2 = U(0,5), y_1 = U(-5,0), \text{and } y_2 = N(0,0.1)$; we have $\mathbf{x} = \texttt{concat}(x_1,x_2)$ and $\mathbf{y} = \texttt{concat}(y_1,y_2).$
\end{itemize}
We considered $N(0,0.1)$ to denote a random normal realization with $\mu = 0$ and $\sigma^2=0.1$, and $U(a,b)$ to denote a uniform realization in the $[a,b]$ interval.

Figure \ref{fig::aMV_appendix} contains all images from the MV experiment. We do not have any new insight beyond the ones exposed in the main article. Note that we have included all methods but SOD. The reason was that SOD failed to execute for datasets Star, Spiral, and Circle. 
\begin{figure*}
\begin{subfigure}{1\linewidth}
    \centering
    \includegraphics[width = 1\linewidth]{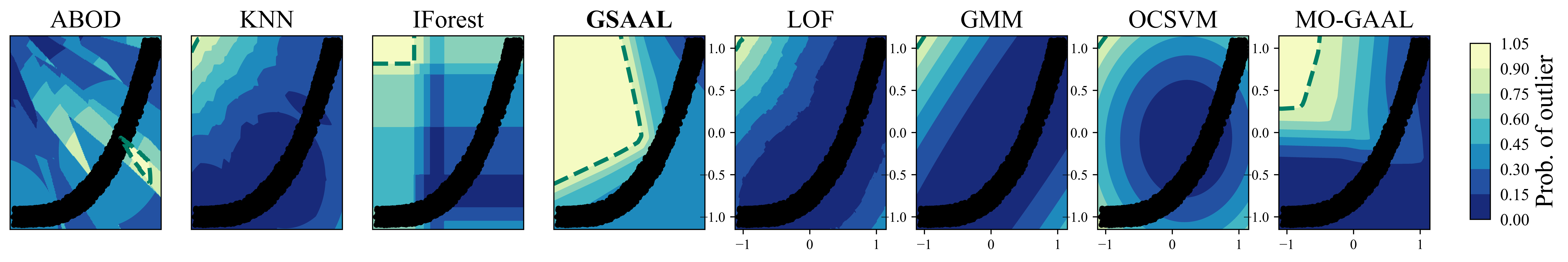}
    \caption{Banana}
    \label{fig:aenter-label}
\end{subfigure}
\begin{subfigure}{1\linewidth}
    \centering
    \includegraphics[width = 1\linewidth]{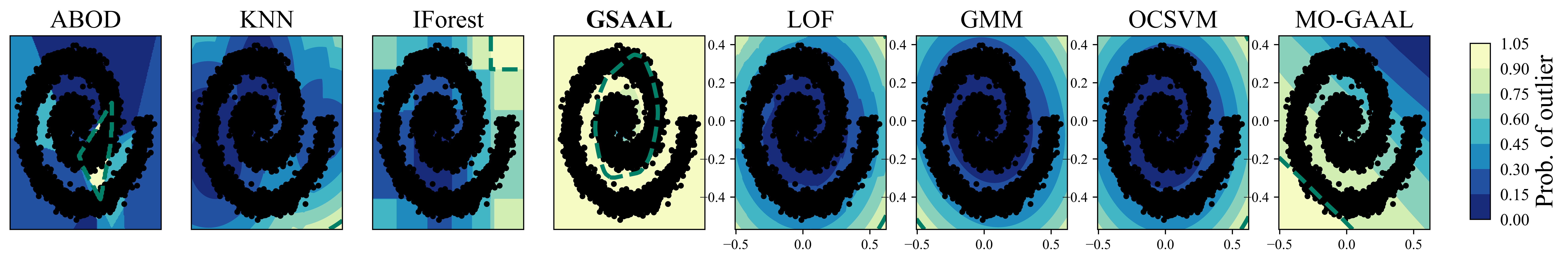}
    \caption{Spiral}
    \label{fig:benter-label}
\end{subfigure}
\begin{subfigure}{1\linewidth}
    \centering
    \includegraphics[width = 1\linewidth]{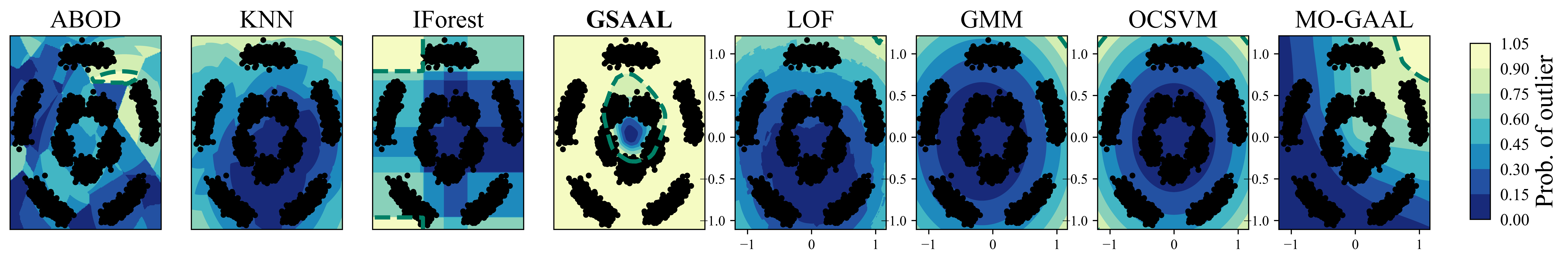}
    \caption{Star}
    \label{fig:center-label}
\end{subfigure}
\begin{subfigure}{1\linewidth}
    \centering
    \includegraphics[width = 1\linewidth]{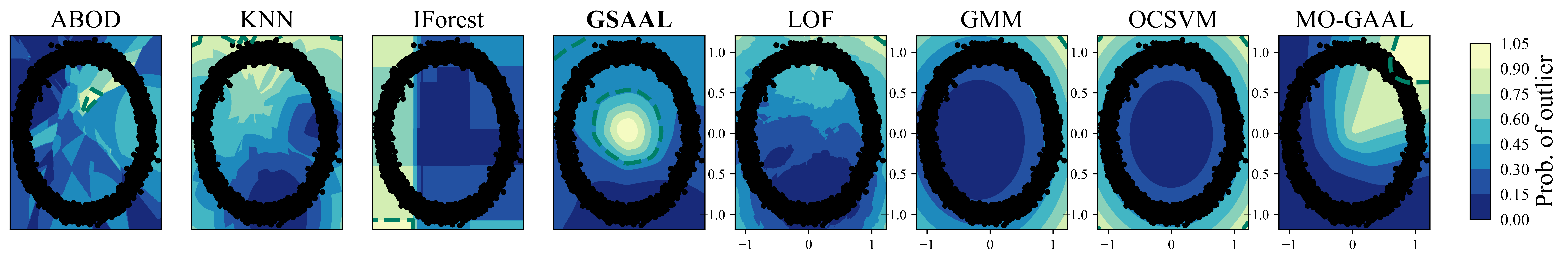}
    \caption{Circle}
    \label{fig:denter-label}
\end{subfigure}
\begin{subfigure}{1\linewidth}
    \centering
    \includegraphics[width = 1\linewidth]{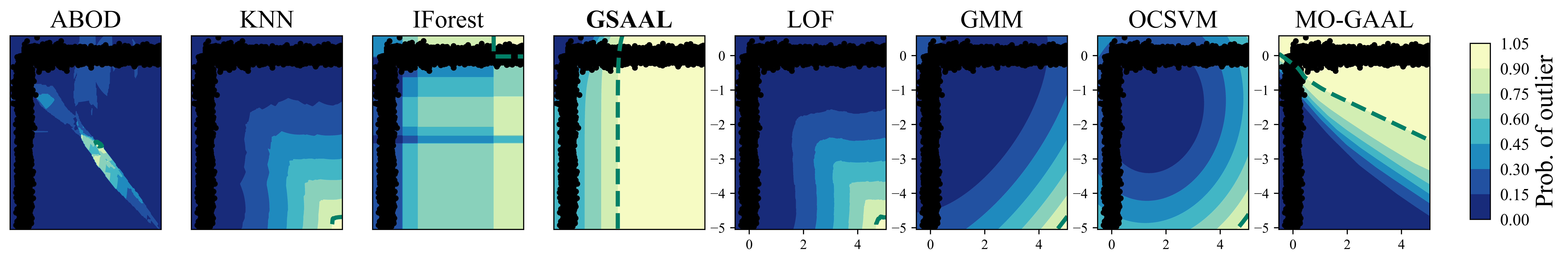}
    \caption{L}
    \label{fig:eenter-label}
\end{subfigure}
\caption{Projected classification boundaries for the datasets in section 4.2 and the extra datasets.}
\label{fig::aMV_appendix}
\end{figure*}

\subsection{One-class Classification (extension)}
As we noted in Section 4, we obtained our benchmark datasets from \cite{Songqiao-2022}, a benchmark study for One-class classification methods in tabular data. Some of the datasets featured in the study, and also in our experiments, were obtained from embedding image or text data using a pre-trained NN (ResNet \cite{He-2015L} and BERT \cite{Devlin-2019}, respectively). We refrain the interested reader into \cite{Songqiao-2022} for additional information. Additionally, we found discrepancies between the versions of the datasets in the study of \cite{Campos-2016} and \cite{Songqiao-2022}. We utilized the version of those datasets featured in \cite{Campos-2016} for our experiments due to popularity. This affected the datasets \emph{Arrhythmia, Annthyroid, Cardiotocography, InternetAds, Ionosphere, SpamBase, Waveform, WPBC} and \emph{Hepatitis}.

Table \ref{tab::occ-AUC} contains all of the AUCs from our experiments. We have included extra methods that, although popular and readily available, were either too similar to other methods in our related work that were more popular (Deep SVDD and INNE) or too far away from our related work (AnoGAN) to be included in the main text. Nevertheless, due to their popularity and inclusion in the libraries used to perform the experiments, we include their results in the appendix for further comparison. Particularly, for INNE we utilized their implementation and default hyperparameters from \texttt{pyod} for the experiments. For DeepSVDD~\cite{ruff-2018} and AnoGAN~\cite{Schlegl-2017}, as there are no official guides on how to fit the models, we tried different training parameter combinations and took the highest obtained AUC. We used their implementation in their official code repository. Whenever a method failed to execute in a particular dataset we denoted it as \textbf{FA}. As it is standard in these studies \cite{Campos-2016,LiuYezheng-2020}, we did not use those datasets subsequent statistical tests. 

\begin{sidewaystable*}
\begin{tabular}{lrrrrrrrrrrrr}
\toprule
Dataset          & \multicolumn{1}{l}{\textbf{GSAAL}} & \multicolumn{1}{l}{LOF} & \multicolumn{1}{l}{IForest} & \multicolumn{1}{l}{ABOD} & \multicolumn{1}{l}{SOD} & \multicolumn{1}{l}{KNN} & \multicolumn{1}{l}{SVDD} & \multicolumn{1}{l}{MO-GAAL} & \multicolumn{1}{l}{GMM} & \multicolumn{1}{l}{DeepSVDD} & \multicolumn{1}{l}{AnoGAN} & \multicolumn{1}{l}{INNE} \\ \midrule
annthyroid       & 0,7681                             & 0,6753                  & 0,7094                      & 0,7008                   & 0,5243                  & 0,6291                  & 0,4611                   & 0,5047                      & 0,6932                  & 0,872                        & 0,4038                     & 0,5081                   \\
Arrhythmia       & 0,7532                             & 0,7277                  & 0,7695                      & 0,7422                   & 0,6514                  & 0,7334                  & 0,7442                   & 0,6901                      & 0,7296                  & 0,7485                       & 0,6133                     & 0,7471                   \\
Cardiotocography & 0,8727                             & 0,8038                  & 0,7772                      & 0,7956                   & 0,3524                  & 0,7733                  & 0,8351                   & 0,7912                      & 0,7413                  & 0,874                        & 0,3248                     & 0,8024                   \\
CIFAR10          & 0,7862                             & 0,7333                  & 0,6853                      & 0,7622                   & 0,6607                  & 0,7493                  & 0,7074                   & 0,6256                      & 0,7462                  & 0,6158                       & 0,3705                     & 0,7306                   \\
FashionMNIST     & 0,8001                             & 0,8995                  & 0,8298                      & 0,9009                   & 0,7136                  & 0,9179                  & 0,8130                   & 0,7930                      & 0,9072                  & 0,6981                       & 0,7137                     & 0,8953                   \\
fault            & 0,6726                             & 0,6436                  & 0,6518                      & 0,8019                   & 0,5670                  & 0,7849                  & 0,5651                   & 0,6821                      & 0,6856                  & 0,4972                       & 0,4074                     & 0,6026                   \\
InternetAds      & 0,7809                             & 0,8565                  & 0,4739                      & 0,8600                   & 0,3663                  & 0,8090                  & 0,7063                   & 0,7603                      & 0,9113                  & 0,8411                       & 0,5165                     & 0,7643                   \\
Ionosphere       & 0,9593                             & 0,9591                  & 0,9377                      & 0,9483                   & 0,8250                  & 0,9825                  & 0,8379                   & 0,9727                      & 0,9644                  & 0,967                        & 0,8406                     & 0,9596                   \\
landsat          & 0,5217                             & 0,7598                  & 0,5927                      & 0,7627                   & 0,4821                  & 0,7726                  & 0,4792                   & 0,4432                      & 0,4998                  & 0,69                         & 0,4835                     & 0,6672                   \\
letter           & 0,6625                             & 0,8888                  & 0,6493                      & \textbf{FA}                       & 0,7182                  & 0,9066                  & 0,9334                   & 0,4828                      & 0,8435                  & 0,676                        & 0,5257                     & 0,7224                   \\
mnist            & 0,7638                             & 0,9484                  & 0,8647                      & 0,9189                   & 0,4858                  & 0,9318                  & \textbf{FA}                       & 0,6151                      & 0,9210                  & 0,7604                       & 0,2502                     & 0,8980                   \\
optdigits        & 0,8935                             & 0,9991                  & 0,8625                      & 0,9846                   & 0,4260                  & 0,9983                  & 0,9999                   & 0,8105                      & 0,8221                  & 0,9086                       & 0,6203                     & 0,9012                   \\
satellite        & 0,8630                             & 0,8456                  & 0,7834                      & \textbf{FA}                       & 0,4745                  & 0,8753                  & 0,8740                   & \textbf{FA}                          & 0,7957                  & 0,7798                       & 0,3099                     & 0,8309                   \\
satimage-2       & 0,9836                             & 0,9966                  & 0,9910                      & 0,9977                   & 0,6745                  & 0,9992                  & 0,9826                   & 0,6317                      & 0,9967                  & 0,9755                       & 0,3968                     & 0,9984                   \\
SpamBase         & 0,8717                             & 0,7132                  & 0,8374                      & 0,7730                   & 0,3774                  & 0,7036                  & 0,6302                   & 0,7377                      & 0,8034                  & 0,7807                       & 0,4826                     & 0,6653                   \\
speech           & 0,6029                             & 0,5075                  & 0,5030                      & 0,8741                   & 0,4364                  & 0,4853                  & 0,4640                   & 0,5138                      & 0,5217                  & 0,6076                       & 0,4821                     & 0,4800                   \\
SVHN            & 0,6859                             & 0,7192                  & 0,5834                      & 0,6989                   & 0,5781                  & 0,6788                  & 0,6150                   & 0,7055                      & 0,6684                  & 0,5894                       & 0,4621                     & 0,6488                   \\
Waveform         & 0,8092                             & 0,7530                  & 0,6902                      & 0,7115                   & 0,5814                  & 0,7623                  & 0,5514                   & 0,6049                      & 0,5791                  & 0,7214                       & 0,7018                     & 0,7562                   \\
WPBC             & 0,6326                             & 0,5695                  & 0,5681                      & 0,6156                   & 0,5333                  & 0,5830                  & 0,5681                   & 0,5972                      & 0,5660                  & 0,4907                       & 0,4121                     & 0,5738                   \\
Hepatitis        & 0,6982                             & 0,5030                  & 0,6568                      & 0,5207                   & 0,2959                  & 0,5680                  & 0,4024                   & \textbf{FA}                          & 0,7574                  & 0,8284                       & 0,3787                     & 0,5325                   \\
MVTec-AD         & 0,9806                             & 0,9679                  & 0,9755                      & 0,9689                   & 0,9662                  & 0,9703                  & 0,9645                   & 0,6412                      & 0,9776                  & 0,7422                       & 0,5179                     & 0,9720                   \\
20newsgroups     & 0,5535                             & 0,7854                  & 0,6675                      & \textbf{FA}                       & 0,7109                  & 0,7260                  & 0,6329                   & 0,5313                      & 0,8103                  & 0,6063                       & 0,4833                     & 0,7074                   \\ \bottomrule
\end{tabular}
\caption{AUC of all the methods tested in section 4.3 and extra methods.}
\label{tab::occ-AUC}
\end{sidewaystable*}

\bibliographystyle{named}
\bibliography{ijcai24}

\end{document}